\renewcommand{\sectionautorefname}{Sec.}
\renewcommand{\subsectionautorefname}{Sec.}
\newcommand*{\argmax}{\operatornamewithlimits{argmax}}
\newcommand*{\argmin}{\operatornamewithlimits{argmin}}
\newcommand*{\field}[1]{\mathbb{\MakeUppercase{#1}}}		
\newcommand*{\set}[1]{{\mathcal{\MakeUppercase{#1}}}}			
\newcommand*{\norm}[1]{\lVert #1 \rVert}	
\newcommand*{\flexnorm}[1]{\left\lVert #1 \right\rVert}
\newcommand*{\inner}[1]{\langle #1 \rangle}	
\newcommand*{\card}[1]{\lvert #1 \rvert}
\newcommand*{\R}{\field{R}} 
\newcommand*{\N}{\field{N}} 
\newcommand*{\C}{\field{C}} 
\newcommand*{\Z}{\field{Z}} 
\newcommand*{\imunit}{\iota}    
\renewcommand{\vec}[1]{{\boldsymbol{\mathbf{#1}}}}
\newcommand*{\mat}[1]{\vec{\MakeUppercase{#1}}}
\newcommand*{\eye}{\mat{I}}							
\newcommand*{\transpose}{\mathsf{T}}
\newcommand*{\tr}{{\operatorname{Tr}}}						
\newcommand*{\vecmean}[1]{\vec{\widehat{#1}}}
\newcommand*{\eigval}{\lambda}
\newcommand*{\idop}{\operator{I}}
\newcommand*{\cspace}{\set{C}}
\newcommand*{\lpspace}[1]{\set{L}^{#1}}
\newcommand*{\lopspace}{\set{L}}
\newcommand*{\functional}[1]{{\MakeUppercase{#1}}}
\newcommand*{\operator}[1]{{{\MakeUppercase{#1}}}}
\newcommand*{\dataset}{\set{D}} 				
\newcommand*{\observation}{y} 					
\newcommand*{\observations}{\vec{\observation}}
\newcommand*{\obsnoise}{\epsilon}
\newcommand*{\gp}{\mathcal{GP}}
\newcommand*{\gpmean}{\mu}
\newcommand*{\gpfunction}{g}
\newcommand*{\mig}{\gamma}
\newcommand*{\kernel}{k}
\newcommand*{\vkernel}{\vec{\kernel}}
\newcommand*{\Kernel}{\mat{\kernel}}
\newcommand*{\slocation}{x} 
\newcommand*{\location}{{\slocation}} 
\newcommand*{\domain}{\set{X}} 
\newcommand*{\dimension}{d}
\newcommand*{\measure}[1]{\MakeUppercase{#1}}
\newcommand*{\expectation}{\mathbb{E}}
\newcommand*{\variance}{\mathbb{V}}
\newcommand*{\covariance}{\operatorname{Cov}}
\newcommand*{\pmeasure}{\measure{P}}
\newcommand*{\prob}[1]{\mathbb{P}\left[ #1 \right]}
\newcommand*{\normal}{\mathcal{N}}					
\newcommand*{\diff}{{\mathop{}\operatorname{d}}}
\newcommand*{\lpmeasure}{\nu}
\newcommand*{\Hspace}{\set{H}} 						
\newcommand*{\feature}{\phi}
\newcommand*{\features}{\mat{\Phi}}
\newcommand*{\ffeature}{\vec{\feature}}
\newcommand*{\regFactor}{\lambda}
\newcommand*{\Sspace}{\set{S}} 				
\newcommand*{\regret}{r}					
\newcommand*{\Regret}{\functional{R}}					
\newcommand*{\bcr}{\Regret}
\newcommand*{\objective}{f}
\newcommand*{\vnoise}{\xi}
\newcommand*{\vnCov}{\operator{\Sigma}}
\newcommand*{\obsspace}{\set{Y}}
\newcommand*{\obsdim}{m}
\newcommand*{\outfun}{u}
\newcommand*{\infun}{a}
\newcommand*{\opkernel}{\operator{K}}
\newcommand*{\model}{h}
\newcommand*{\parameter}{\theta}
\newcommand*{\parameters}{\vec{\parameter}}
\newcommand*{\paramdim}{M}
\newcommand*{\paramspace}{\set{W}} 
\newcommand*{\paramcov}{\mat{\Sigma}}
\newcommand*{\weight}{w}    
\newcommand*{\weights}{\vec{\weight}}    
\newcommand*{\Weights}{\mat{\weight}}
\newcommand{\nnop}{\operator{G}}             
\newcommand*{\nnbias}{b}                     
\newcommand*{\nnkernel}{\mat{R}}              
\newcommand*{\infspace}{\set{A}}
\newcommand*{\outfspace}{\set{U}}
\newcommand*{\layer}{l}                     
\newcommand*{\nlayers}{L}
\newcommand*{\actfun}{\alpha}
\newcommand*{\indim}{{\dimension_{\infun}}}
\newcommand*{\outdim}{{\dimension_{\outfun}}}
\newcommand*{\nnproj}{\operator{\Pi}}
\newcommand*{\nnkop}{\operator{A}_\nnkernel}           
\newcommand*{\Fourier}{\operator{F}}
\newcommand*{\freq}{s}
\newcommand*{\nninput}{\vec{v}}
\newcommand*{\nndomain}{\set{V}}
\newcommand*{\outdomain}{\set{Z}}
\newcommand*{\outlocation}{z}
\newcommand*{\obsop}{\operator{H}}
\newcommand*{\loss}{\ell}
\newcommand{\ntk}{\mathtt{NTK}}
\newcommand{\nngp}{\mathtt{NNGP}}
\newcommand*{\radius}{R}
\newcommand*{\ball}{\set{B}}
\newcommand*{\iterIdx}{t}
\newcommand*{\nIterations}{T}
\newcommand*{\nObs}{N}
\newcommand*{\bigo}{\set{O}}        
\newcommand*{\anyconstant}{c}
\def\subsectionautorefname{Section}
\declaretheorem[style=plain]{theorem, definition, proposition, lemma, corollary, remark, assumption}
\title{Thompson Sampling in Function Spaces\\via Neural Operators}
\author{%
    Rafael Oliveira\thanks{Corresponding author: \texttt{rafael.dossantosdeoliveira@data61.csiro.au}}\\
    CSIRO's Data61\\
    Sydney, Australia\\
    \And
    Xuesong Wang\\
    CSIRO's Data61\\
    Sydney, Australia\\
    \And
    Kian Ming A. Chai\\
    DSO National Laboratories\\
    Singapore\\
    \And
    Edwin V. Bonilla\\
    CSIRO's Data61\\
    Sydney, Australia\\
}
\begin{document}
\maketitle

\begin{abstract}
We propose an extension of Thompson sampling to optimization problems over function spaces where the objective is a known functional of an unknown operator's output. We assume that queries to the operator (such as running a high-fidelity simulator or physical experiment) are costly, while functional evaluations on the operator's output are inexpensive. Our algorithm employs a sample-then-optimize approach using neural operator surrogates. This strategy avoids explicit uncertainty quantification by treating trained neural operators as approximate samples from a Gaussian process (GP) posterior. We derive regret bounds and theoretical results connecting neural operators with GPs in infinite-dimensional settings. Experiments benchmark our method against other Bayesian optimization baselines on functional optimization tasks involving partial differential equations of physical systems, demonstrating better sample efficiency and significant performance gains.
\end{abstract}

\section{Introduction}
Neural operators have established themselves as versatile models capable of learning complex, nonlinear mappings between function spaces \citep{Kovachki2023}. They have demonstrated success across diverse fields, including climate science \citep{Kurth2023fourcastnet}, materials engineering \citep{Oommen2024}, and computational fluid dynamics \citep{Li2023gino}. Although their applications in supervised learning and physical system emulation are well-studied, their potential for online learning and optimization within infinite-dimensional function spaces remains relatively untapped.

In many scientific contexts, learning operators that map between function spaces naturally arises, such as the task of approximating solution operators for a partial differential equation (PDE) \citep{Kovachki2023}. However, adaptive methods that efficiently query these operators to optimize functional objectives of their outputs (particularly in an active learning setting) are still underdeveloped. 
For example, when designing porous structures, one is often interested in optimizing how liquids flow through the structure using, e.g., Darcy flow PDEs \citep{wiker2007topology}, and, in the sciences, inverse problems can be solved by optimization to infer initial conditions or parameters of a physical process from observations \citep{Penenko2019, MacKinlay2021}.

To address this gap, we propose a framework that integrates neural operator surrogates with Thompson sampling-based acquisition strategies \citep{Russo2014} to actively optimize objectives of the form:
\[
a^* \in \argmax_{\infun \in \infspace} \objective(\nnop_*(\infun)),
\]
where \(\nnop_*: \infspace \to \outfspace\) is an unknown operator between function spaces \(\infspace\) and \(\outfspace\), and \(\objective:\outfspace \to \R\) is a known functional. We follow the steps of Bayesian optimization frameworks for composite functions \citep{Astudillo2019composite, Guilhoto2024neon}, which leverage knowledge of the composite structure to speed-up optimization, extending these frameworks to functional domains. Applying the theoretical results for the infinite-width limit of neural networks \citep{Jacot2018, Lee2019}, we show that a trained neural operator approximates a posterior sample from a vector-valued Gaussian process \citep{Rasmussen2006, Alvarez2012, Jorgensen2024} in a sample-then-optimize approach \citep{Dai2022sto}. Therefore, we are able to implement an approximate form of Thompson sampling without the need for expensive uncertainty quantification frameworks for neural operators, such as deep ensembles \citep{Pickering2022} or mixture density networks \citep{Li2024mrafno}, and derive theoretical regret bounds on its performance. Experiments evaluate our approach on problems with classic PDE benchmarks against Bayesian optimization baselines.

\section{Related work}

\paragraph{Bayesian optimization with functionals and operators.}
Bayesian optimization (BO) has been a successful approach for optimization problems involving expensive-to-evaluate black-box functions \citep{Shahriari2016}. Prior work on BO in function spaces includes Bayesian Functional Optimization (BFO) \citep{Vien2018}, which uses Gaussian processes to model objectives defined over functions, focusing on scalar functionals without explicitly learning operators. Follow-up work extended the framework to include prior information about the structure of the admissible input functions \citep{Vellanki2019bfosp}. \citet{Astudillo2019composite} introduced the framework of composite Bayesian optimization, which was later applied by \citet{Guilhoto2024neon} to optimization problems involving mappings from \emph{finite-dimensional} inputs to \emph{function-valued} outputs. Their objective was to optimize a known functional of these function-valued outputs.  Our approach differs by directly working in function spaces, involving function-to-function operators.  Despite the availability of GP models for function-to-function mappings \citep{Mora2025}, we are unaware of BO or GP-based bandit algorithms incorporating such models. Lastly, in the bandits literature, \citet{Tran-Thanh2014} introduced the problem of functional bandits. Despite the terminology, they deal with the problem of optimizing a known functional of the arms rewards \emph{distribution}, similar to the setting of distributionally robust BO \citep{Kirschner2020}, and therefore not directly comparable to our case.

\paragraph{Thompson sampling with neural networks.}
Neural Thompson Sampling (NTS) \citep{Zhang2021nts} employs neural networks trained via random initialization and gradient descent to approximate posterior distributions for bandit problems with scalar inputs and outputs, inspiring our use of randomized neural training for operator posterior sampling. The Sample-Then-Optimize Batch NTS (STO-BNTS) variant \citep{Dai2022sto} refines this by defining acquisition functions on functionals of posterior samples, facilitating composite objective optimization. STO-BNTS extends this to batch settings using Neural Tangent Kernel (NTK) and Gaussian process surrogates, relevant for future batched active learning with neural operators. These approaches rely on the NTK theory \citep{Jacot2018}, which shows that infinitely wide neural networks trained via gradient descent behave as Gaussian processes. To the best of our knowledge, this approach has not yet been extended to the case of neural network models with function-valued inputs, such as neural operators.

\paragraph{Active learning for neural operators.}

\citet{Pickering2022} applied deep operator networks (DeepONets) \citep{Lu2021deeponet} to the problem of Bayesian experimental design \citep{Rainforth2024}. In that framework, the goal is to select informative inputs (or designs) to reduce uncertainty about an unknown operator. To quantify uncertainty, \citet{Pickering2022} used an ensemble of DeepONets and quantified uncertainty in their predictions based on the variance of the ensemble outputs. \citet{Li2024mrafno} introduced multi-resolution active learning with Gaussian mixture models derived from Fourier neural operators \citep{Li2021fno}. With probabilistic outputs, mutual information can be directly quantified for active learning and Bayesian experimental design approaches. Lastly, \citet{Musekamp2025} proposed a benchmark for neural operator active learning and evaluated ensemble-based models with variance-based uncertainty quantification on tasks involving forecasting. In contrast to our focus in this paper, active learning approaches are purely focused on uncertainty reduction, neglecting other optimization objectives.

\section{Preliminaries}
\label{sec:setup}

\paragraph{Problem formulation.} Let \(\infspace\) and \(\outfspace\) denote two function spaces, and let \(\nnop_* : \infspace \to \outfspace\) be an unknown target operator\footnote{Here, we use the term \emph{unknown} loosely, in the sense that it is not fully implementable within the computational resources or paradigms accessible to us. For example, the target operator can be a simulator in a high-performance computing facility which we have limited access to.} between them. Consider an objective functional \(\objective : \outfspace \to \R\), which is assumed known and cheap to evaluate. Given a compact search space $\Sspace\subset \infspace$, we aim to solve:\footnote{We use ``$\in \argmax$'' acknowledging that the problem may have multiple global optima, forming a set of global optimizers. Whenever we assume a unique minimizer, we will use the equality symbol ``$=$'', instead.}
\begin{equation}
    \infun^* \in \argmax_{\infun \in \Sspace} \objective(\nnop_*(\infun)),
    \label{eq:problem}
\end{equation}
while \(\nnop_*\) is only accessible via expensive oracle queries: for a chosen \(\infun\), we observe a function-valued output \(\observation = \obsop\nnop_*(\infun) + \vnoise\), where $\obsop:\outfspace\to\obsspace$ represents an observation operator, typically the discretization on a grid, with $\obsspace$ being a (usually finite-dimensional) Hilbert space, and \(\vnoise \sim \normal(0, \vnCov)\) is observation noise, assumed
independent and identically distributed (i.i.d.) across queries.
The algorithm is allowed to query the oracle with any function $\infun\in\Sspace$ for up to a budget of $\nObs$ queries. For this paper, we focus on problems with finite search space $|\Sspace| < \infty$, though the framework is general.

\paragraph{Neural operators.} A neural operator is a specialized neural network architecture modeling  operators $\nnop:\infspace\to\outfspace$ between function spaces $\infspace$ and $\outfspace$ \citep{Kovachki2023}. Assume $\infspace \subset \cspace(\domain, \R^\indim)$ and $\outfspace \subset \cspace(\outdomain,\R^\outdim)$, where $\cspace(\set{S},\set{S}')$ denotes the space of continuous functions between sets $\set{S}$ and $\set{S}'$. Given an input function $\infun\in\infspace$, a neural operator $\nnop_\parameters$ performs a sequence of transformations $\infun =: \outfun_1\mapsto\cdots\mapsto\outfun_{\nlayers-1}\mapsto\outfun_{\nlayers}$ through $L$ layers of neural networks, where $\outfun_\layer: \domain_\layer \to \R^{\dimension_\layer}$ is a continuous function for each layer $\layer\in\{1,\dots,\nlayers\}$, and $\domain_\nlayers := \outdomain$ is the domain of the output functions and $\dimension_\nlayers := \outdim$. In one of its general formulations, for a given layer $\layer\in\{1,\dots,\nlayers\}$, the result of the transform (or update) at any $\location\in\domain_{\layer+1}$ can be described as:
\begin{equation}
    \begin{split}
        \outfun_1(\location) &:= \infun(\location) \\
        \outfun_{\layer+1}(\location) &:=
            \actfun_\layer \left( \int_{\domain_\layer}
            \nnkernel_\layer(
                \location, \location', \outfun_\layer(\nnproj_\layer(\location)), \outfun_\layer(\location')
                )\,
            \outfun_\layer(\location')\diff\lpmeasure_\layer(\location')
            + \Weights_\layer\,\outfun_\layer(\nnproj_\layer(\location))
            + \nnbias_\layer(\location) \right) \\
        \nnop_\parameters(\infun)(\outlocation) &:= \outfun_\nlayers(\outlocation)\,,
    \end{split}
    \label{eq:nnop}
\end{equation}
where $\nnproj_\layer:\domain_{\layer+1}\to\domain_\layer$ is a fixed mapping, $\actfun_\layer:\R\to\R$ denotes an activation function applied elementwise, $\nnkernel_\layer:\domain_{\iterIdx+1}\times\domain_\iterIdx\times\R^{\dimension_\layer}\times\R^{\dimension_\layer}\to\R^{\dimension_{\iterIdx+1}\times\dimension_{\iterIdx}}$ defines a (possibly nonlinear or positive-semidefinite) kernel integral operator with respect to a measure $\lpmeasure_\layer$ on $\domain_\layer$, $\Weights_\layer\in\R^{\dimension_{\layer+1}\times\dimension_{\layer}}$ is a weight matrix, and $\nnbias_\layer:\domain_{\layer+1}\to\R^{\dimension_{\layer+1}}$ is a bias function. We denote by $\parameters$ the collection of all learnable parameters of the neural operator: the weights matrices $\Weights_\layer$, the parameters of the bias functions $\nnbias_\layer$ and the matrix-valued kernels $\nnkernel_\layer$, for all layers $\layer\in\{1,\dots,\nlayers\}$.
Variations to the formulation above correspond to various neural operator architectures based on low-rank kernel approximations, graph structures, Fourier transforms, etc. \citep{Kovachki2023}.

\paragraph{Vector-valued Gaussian processes.} Vector-valued Gaussian processes extend scalar GPs \citep{Rasmussen2006} to the case of vector-valued functions \citep{Alvarez2012}. Let $\infspace$ be an arbitrary domain, and let $\outfspace$ be a Hilbert space representing a codomain. We consider the case where both the domain $\infspace$ and codomain $\outfspace$ might be infinite-dimensional vector spaces, which leads to GPs whose realizations are operators $\nnop_*:\infspace\to\outfspace$ \citep{Jorgensen2024}. To simplify our exposition, we assume that $\outfspace$ is a separable Hilbert space, though the theoretical framework is general enough to be extended to arbitrary Banach spaces \citep{Owhadi2019}. A vector-valued Gaussian process $\nnop_* \sim \gp(\widehat{\nnop}, \opkernel)$ on $\infspace$ is fully specified by a mean operator $\widehat{\nnop} : \infspace \to \outfspace$ and a positive-semidefinite operator-valued covariance function $\opkernel: \infspace \times \infspace \to \lopspace(\outfspace)$, where $\lopspace(\outfspace)$ denotes the space of bounded linear operators on $\outfspace$. Formally, given any $\infun, \infun' \in \infspace$ and any $\outfun,\outfun'\in\outfspace$, it follows that:
\begin{align}
    \expectation[\nnop_*(\infun)] &= \widehat{\nnop}(\infun), \\
    \covariance(\inner{\nnop_*(\infun), \outfun}, \inner{\nnop_*(\infun'), \outfun'}) &= \inner{\outfun, \opkernel(\infun, \infun')\outfun'}\,,
\end{align}
where $\inner{\cdot,\cdot}$ denotes the inner product and $\covariance(\cdot,\cdot)$ stands for the covariance between scalar variables.  Assume we are given a set of observations $\dataset_\iterIdx := \{(\infun_i, \observation_i)\}_{i=1}^\iterIdx \subset \infspace \times \outfspace$, where $\observation_i = \nnop_*(\infun_i) + \vnoise_i$, and $\vnoise_i \sim \normal(0, \vnCov)$ corresponds to Gaussian noise with covariance operator $\vnCov\in \lopspace(\outfspace)$. The posterior mean and covariance can then be defined by the following recursive relations:
\begin{align}
    \widehat{\nnop}_\iterIdx(\infun) &= \widehat{\nnop}_{\iterIdx-1}(\infun) + \opkernel_{\iterIdx-1}(\infun, \infun_{\iterIdx}) (\opkernel_{\iterIdx-1}(\infun_\iterIdx, \infun_\iterIdx) + \vnCov)^{-1} (\observation_\iterIdx -\widehat{\nnop}_{\iterIdx-1}(\infun_\iterIdx))\label{eq:gpop-mean}\\
    \opkernel_\iterIdx(\infun,\infun') &= \opkernel_{\iterIdx-1}(\infun,\infun') - \opkernel_{\iterIdx-1}(\infun,\infun_\iterIdx) (\opkernel_{\iterIdx-1}(\infun_\iterIdx, \infun_\iterIdx) + \vnCov)^{-1} \opkernel_{\iterIdx-1}(\infun_\iterIdx, \infun') \label{eq:gpop-cov}
\end{align}
for any $\infun, \infun' \in \infspace$, and $\iterIdx\in\N$, which are an extension of the same recursions from the scalar-valued case \citep[App. F]{Chowdhury2017} to the case of vector-valued processes. Such definition arises from sequentially conditioning the GP posterior on each observation, starting from the prior with $\widehat{\nnop}_0 := \widehat{\nnop}$ and $\opkernel_0 := \opkernel$.
This recursion leads to the same matrix-based definitions of the usual GP posterior equations \citep{Rasmussen2006}, but in our case it avoids complications with the resulting higher-order tensors that arise when kernels are operator-valued.

\begin{figure}[t]
\noindent
\begin{minipage}[t]{0.48\textwidth}
    \vspace{0pt}
    \begin{algorithm}[H]
        \caption{GP-TS}
        \label{alg:gpts}
        \DontPrintSemicolon
        \KwIn{Search space $\Sspace$, initial data \(\dataset_0\)}
        \For{$\iterIdx \in \{1,\dots, \nIterations\}$}{
            Sample $\gpfunction_\iterIdx \sim \gp(\gpmean_{\iterIdx-1}, \kernel_{\iterIdx-1})$\;
            Select $\location_\iterIdx \in \argmax_{\location\in\domain} \gpfunction_\iterIdx(\location)$\;
            Query $\observation_\iterIdx = \objective(\location_\iterIdx) + \obsnoise_\iterIdx$\;
            Update $\dataset_\iterIdx = \dataset_{\iterIdx-1} \cup \{\location_\iterIdx, \observation_\iterIdx\}$\;
        }
    \end{algorithm}
\end{minipage}%
\hfill 
\begin{minipage}[t]{0.48\textwidth}
    \vspace{0pt}
    \begin{algorithm}[H]
    \caption{NOTS (ours)}
    \label{alg:nots}
    \DontPrintSemicolon
    \KwIn{Search space $\Sspace$, initial data \(\dataset_0\)}
    \For{\(\iterIdx = 1, \ldots, \nIterations\)}{
        \(\parameters_\iterIdx = \argmin_{\parameters} \loss_\iterIdx(\parameters), \:\: \parameters_{\iterIdx, 0} \sim \normal(\vec 0, \mat\Sigma_0)\)\;
        \(\infun_\iterIdx \in \argmax_{\infun \in \Sspace} \objective(\nnop_{\parameters_\iterIdx}(\infun))\)\;
        \(\observation_\iterIdx = \nnop_*(\infun_\iterIdx) + \vnoise_\iterIdx \)\;
        \(\dataset_\iterIdx = \dataset_{\iterIdx-1} \cup \{\infun_\iterIdx, \observation_\iterIdx\}\)\;
    }
    \end{algorithm}
\end{minipage}    
\end{figure}

\paragraph{Thompson sampling.} Thompson sampling (TS) is a relatively simple randomized strategy for sequential decision making under uncertainty, which has found many successes in the Bayesian optimization and multi-armed bandits literature \citep{Russo2014, Kandasamy2018, Zhang2021nts, Takeno2024}. When applied to optimization problems, the core idea of TS is to query an objective function $\objective$ at points $\location_\iterIdx$ sampled from the probability distribution of the optimum location $\location^* \in \argmax_{\location\in\domain} \objective(\location)$ given the observations $\dataset_{\iterIdx-1} := \{\location_i, \observation_i\}_{i=1}^{\iterIdx-1}$. To do so, the objective function is modeled as sample from a Bayesian probabilistic model, which is typically a linear model \citep{Russo2014} or a GP \citep{Takeno2024}, and then TS samples realizations $\gpfunction_\iterIdx$ of the objective from the model's posterior $p(\objective|\dataset_{\iterIdx-1})$. A point $\location_\iterIdx$ which maximizes a sampled function $\gpfunction_\iterIdx$ then corresponds to a sample from the posterior distribution over the optimum $p(\location^* | \dataset_{\iterIdx-1})$. The procedure is summarized in \autoref{alg:gpts} for the case of a GP. Under mild assumptions, TS is known to produce a sequence of candidates $\location_\iterIdx$ such that $\objective(\location_\iterIdx)$ asymptotically converges to $\objective(\location^*)$ \citep{Russo2016,Takeno2024}.

\section{Neural operator Thompson sampling}
\label{sec:method}
We propose a Thompson sampling algorithm for the optimization of functionals of unknown operators in the setting of \autoref{eq:problem}. Instead of relying on extensions of traditional probabilistic methods to operator modeling, our method applies flexible and scalable neural operators as surrogates $\nnop_\iterIdx$, training them to approximate posterior samples over the true operator $\nnop_*$ conditioned on data. The method is designed to efficiently explore the search space while balancing the exploration-exploitation trade-off.

\subsection{Approximate posterior sampling}
\label{sec:training}
Given data \(\dataset_\iterIdx = \{(\infun_i, \observation_i)\}_{i=1}^\iterIdx\), we train a neural operator \(\nnop_\parameters\) with parameters \(\parameters_\iterIdx\) that minimize:
\begin{equation}
\loss_\iterIdx(\parameters) := \sum_{j=1}^{\iterIdx-1} \norm{\observation_j - \obsop\nnop_\parameters(\infun_j)}^2 + \regFactor \norm{\parameters}^2,
\label{eq:training}
\end{equation}
where \(\norm{\cdot}\) represents the norm in the underlying vector space, and
\(\regFactor > 0\) is a regularization factor which relates to the noise process $\vnoise$ \citep{Ordonez2025observation}.
We minimize $\loss_\iterIdx$ via gradient descent starting from $\parameters_{\iterIdx, 0} \sim \normal(0, \mat\Sigma_0)$, where $\mat\Sigma_0$ is a diagonal matrix following Kaiming He \citep{He2015nninit} or LeCun initialization \citep{LeCun1998}, which scale each layer's weights initialization variance according to the width of the previous layer. 
By an extension of standard results on the infinite-width limit of neural networks to the neural operator setting, we can show that the trained neural operator approximates a posterior sample from a vector-valued GP when, e.g., we train only the last linear layer (see App. \ref{sec:sampling-by-gd}), which in turn guarantees regret bounds (\autoref{sec:theory}). The prior over $\nnop_*$ is implicitly defined as the vector-valued Gaussian process given by the conjugate kernel \citep{Lee2018,Hu2021} associated with the neural operator architecture and the weights initialization distribution. Lastly, we note that, in practice, observations are discretized over a finite grid or other finite-dimensional representation \citep{Kovachki2023}, so that the observation space is $\obsspace\subseteq\R^\obsdim$ and the difference norms in \autoref{eq:training} reduce to Euclidean distances.

\subsection{Thompson sampling algorithm}
\label{sec:algorithm}
In \autoref{alg:nots}, we present the Neural Operator Thompson Sampling (NOTS) algorithm for the optimization of problem-dependent functionals of black-box operators. The algorithm operates sequentially over \(\nIterations\) iterations similar to standard GP-TS (\autoref{alg:gpts}). To sample a realization from the neural operator posterior, each iteration begins with the random initialization of the parameters of a neural operator that serves as a surrogate model for the true unknown operator. At each iteration, the neural operator model is trained according to \autoref{sec:training}, minimizing a regularized least-squares loss based on the currently available data, yielding an approximate sample $\nnop_\iterIdx:=\nnop_{\parameters_\iterIdx}$ from the true operator posterior $p(\nnop_*|\dataset_{\iterIdx-1})$.
The next step involves selecting the input for querying the oracle by maximizing the value of the objective functional $\objective$ over the neural operator's predictions $\nnop_\iterIdx(\infun)$. Finally, the algorithm runs the potentially expensive step of querying the true operator $\nnop_*$ with the selected input function $\infun_\iterIdx$, which may involve a complex simulation or physical experiment, and updates the dataset with the new (noisy) observation $\observation_\iterIdx$. This process repeats for up to $\nIterations$ iterations, producing a sequence of function-valued queries $\infun_\iterIdx$ that approximates the true optimum $\infun^*$ \eqref{eq:problem}.

\paragraph{Computational cost.} Each iteration of NOTS incurs a linear computational cost of $\bigo(\iterIdx)$ due to the retraining of the neural operator model, which can be further reduced by use of minibatch stochastic gradient descent. The reinitialization with randomized weights followed by retraining is what ensures that we have a new approximate posterior sample for TS conditioned on the available data at every iteration. Compared to a more traditional GP-based approach, which applied to our setting would incur a $\bigo(\iterIdx^3)$ cost per step due to the inversion of a covariance matrix of $\iterIdx$ data points, we achieve a much more computationally efficient and scalable algorithm, despite the cost of retraining the model.

\section{Theoretical results}
\label{sec:theory}

In this section, we establish the theoretical foundation of our proposed method. We show how a randomly initialized neural operator approximates a GP in the infinite-width limit through the use of the conjugate kernel, also known as the NNGP kernel \citep{Neal1996, Daniely2017, Lee2018, Fan2020, Hu2021}, under certain assumptions. This allows us to extend existing results for GP Thompson Sampling (GP-TS) \cite{Takeno2024} to our setting.

\subsection{Neural operator abstraction}
A neural operator models nonlinear operators \(\nnop:\infspace\to\outfspace\) between possibly infinite-dimensional function spaces \(\infspace\) and \(\outfspace\). Current results in NTK \citep{Jacot2018} and GP limits for neural networks \citep{Lee2019} do not immediately apply to this setting, as they rely on finite-dimensional domains. However, we can leverage an abstraction for neural operator architectures which sees their layers as maps over finite-dimensional inputs \citep{Nguyen2024}, which result from truncations to make the modeling problem tractable.

Considering a neural operator with a \emph{single} hidden layer, let $\paramdim\in\N$ represent the layer's width, $\nnkop:\infspace\to\cspace(\outdomain, \R^{\dimension_\nnkernel})$ denote a (fixed) continuous operator, and $\nnbias_0:\outdomain\to\R^{\dimension_\nnbias}$ denote a (fixed) continuous function.  For simplicity, we will assume scalar-valued output functions with $\outdim=1$. In general, with a single hidden layer, the model described in \autoref{eq:nnop} can be rewritten as:
\begin{equation}
    \nnop_\parameters(\infun)(\outlocation) = \weights_o^\transpose\actfun\left( \Weights_\nnkernel\nnkop(\infun)(\outlocation) + \Weights_\outfun\infun(\nnproj_0(\outlocation)) + \Weights_\nnbias\nnbias_0(\outlocation) \right)\,, \quad \outlocation\in\outdomain\,,
    \label{eq:nnop-abstraction}
\end{equation}
where $\parameters := \operatorname{vec}(\weights_o, \Weights_\nnkernel, \Weights_\outfun, \Weights_\nnbias) \in \R^{\paramdim(1 + \dimension_\nnkernel + \indim + \dimension_\nnbias)} =: \paramspace$ represents the model's flattened parameters. The finite weight matrix $\Weights_\nnkernel$ representing the kernel convolution integral arises as a result of truncations required in the practical implementation of neural operators (e.g., a finite number of Fourier modes or quadrature points).  With this formulation, one can recover most popular neural operator architectures \citep{Nguyen2024}. In the appendix, we discuss how Fourier neural operators \citep{Li2021fno} fit under this formulation, though the latter is general enough to incorporate other cases. We also highlight that neural operators possess universal approximation properties \citep{Kovachki2021universal}, given sufficient data and computational resources, despite the inherent low-rank approximations in their architecture.

\subsection{Infinite-width limit of neural operators}
\label{sec:nnop-limit}
With the construction in \autoref{eq:nnop-abstraction}, we can simply see the result of a neural operator layer when evaluated at a fixed $\outlocation\in\outdomain$ equivalently as a $\paramdim$-width feedforward neural network:
\begin{equation}
    \nnop_\parameters(\infun)(\outlocation) = \model_\parameters(\nninput_\outlocation(\infun)) := \weights_o^\transpose \actfun(\Weights\nninput_\outlocation(\infun))\,,
\end{equation}
where the input is given by $\nninput_\outlocation(\infun) := \left[ \nnkop(\infun)(\outlocation),\; \infun(\nnproj_0(\outlocation)),\; \nnbias_0(\outlocation) \right] \in \nndomain$, and $\nndomain := \R^{\dimension_\nnkernel + \indim + \dimension_\nnbias}$.

\paragraph{Conjugate kernel.} We can now derive infinite-width limits. The conjugate kernel describes the distribution of the untrained neural network $\model_\parameters:\nndomain\to\R$ under Gaussian weights initialization, whose infinite-width limit yields a Gaussian process \citep{Neal1996,Lee2018}. Formally, the conjugate kernel is defined as:
\begin{equation}
    \kernel_\model(\nninput, \nninput') := \lim_{\paramdim\to\infty} \expectation_{\parameters_0 \sim \normal(\vec 0, \paramcov_0)} [\model_{\parameters_0}(\nninput) \model_{\parameters_0}(\nninput')], \quad \nninput,\nninput'\in\nndomain\,.
\end{equation}
Since the composition of the map $\infspace\times\outdomain\ni(\infun,\outlocation)\mapsto\nninput_\outlocation(\infun)\in\nndomain$ with a kernel on $\nndomain$ yields a kernel on $\infspace\times\outdomain$ \citep[Lem. 4.3]{Steinwart2008}, the conjugate kernel of $\nnop_\parameters$ is determined by:
\begin{equation}
    \kernel_\nnop(\infun, \outlocation, \infun', \outlocation') := \kernel_\model(\nninput_\outlocation(\infun), \nninput_{\outlocation'}(\infun')), \quad \infun, \infun'\in\infspace, \quad \outlocation, \outlocation'\in\outdomain\,,
\end{equation}
where $\kernel_\model$ is the conjugate kernel of the neural network $\model_\parameters$.  Such a kernel defines a covariance function for a GP over the space of operators mapping $\infspace$ to $\outfspace$. Assume $\outfspace \subset \lpspace{2}(\lpmeasure)$ is a closed subspace of the space of functions which are square integrable with respect to a $\sigma$-finite Borel measure on $\outdomain$, and let $\lopspace(\outfspace)$ denote the space of linear operators on $\outfspace$. The following then defines a positive-semidefinite operator-valued kernel $\opkernel_\nnop:\infspace\times\infspace\to\lopspace(\outfspace)$:
\begin{equation}
    (\opkernel_\nnop(\infun, \infun')\outfun)(\outlocation) = \int_{\outdomain} \kernel_\nnop(\infun, \outlocation, \infun', \outlocation')\outfun(\outlocation')\diff\lpmeasure(\outlocation'),
    \label{eq:opkernel}
\end{equation}
for any $\outfun\in\outfspace$, $\infun, \infun'\in\infspace$ and $\outlocation\in\outdomain$. Hence, we can state the following result, whose proof can be found in Appendix \ref{app:infinite-width-limit}.

\begin{restatable}{proposition}{ckresult} 
    \label{thr:kernel}
    Let $\nnop_\parameters: \infspace \to \outfspace$ be a neural operator with a single hidden layer, where $\outfspace \subseteq \lpspace{2}(\lpmeasure)$ is closed, and $\lpmeasure$ is a finite Borel measure on $\outdomain$. Assume $\weights_o \sim \normal(\vec 0, \sigma_\parameters^2\eye)$, for $\sigma_\parameters^2 > 0$ such that $\sigma_\parameters^2 \propto 1/\paramdim$, while the remaining parameters have their entries sampled from a fixed normal distribution. Then, as $\paramdim\to\infty$, on every compact subset of $\infspace$, the neural operator converges in distribution to a zero-mean vector-valued Gaussian process with operator-valued covariance function given by:
        \begin{equation*}
            \lim_{\paramdim\to\infty} \expectation_{\parameters \sim \normal(\vec 0, \paramcov_0)} [\nnop_\parameters(\infun) \otimes \nnop_\parameters(\infun')] = \opkernel_\nnop(\infun,\infun')\,, \quad \infun, \infun'\in\infspace\,,
        \end{equation*}
        where $\opkernel_\nnop: \infspace \times \infspace \to \lopspace(\outfspace)$ is defined in \autoref{eq:opkernel}, and $\otimes$ denotes the outer product.
\end{restatable}

\subsection{Bayesian cumulative regret bounds}
\paragraph{Bayesian regret.} We analyze the performance of a sequential decision-making algorithm via its Bayesian cumulative regret. An algorithm's instant regret for querying $\infun_\iterIdx\in\infspace$ at iteration $\iterIdx\geq 1$ is:
\begin{equation}
    \regret_\iterIdx := \objective(\nnop_*(\infun^*)) - \objective(\nnop_*(\infun_\iterIdx))\,
\end{equation}
where $\infun^*$ is defined in \autoref{eq:problem}. 
The Bayesian cumulative regret after $\nIterations$ iterations is then defined as:
\begin{equation}
    \bcr_\nIterations := \expectation\left[\sum_{\iterIdx=1}^\nIterations \regret_\iterIdx\right]\,,
\end{equation}
where the expectation is over all sources of randomness affecting the decision-making process, i.e., the prior for $\nnop_*$ and the observation noise. If the algorithm achieves sub-linear cumulative regret, its simple regret asymptotically vanishes, as
$\lim_{\nIterations\to\infty} \expectation\left[\min_{\iterIdx\in\{1,\dots,\nIterations\}} \regret_\iterIdx \right] \leq \lim_{\nIterations\to\infty} \frac{1}{\nIterations}\Regret_\nIterations$, leading the algorithm's queries $\infun_\iterIdx$ to eventually approach the true optimum $\infun^*$.

\paragraph{Regularity assumptions.} For our analysis, we assume $\outfspace \subseteq \lpspace{2}(\lpmeasure)$ is a closed subspace of the Hilbert space $\lpspace{2}(\lpmeasure)$ of square-integrable $\lpmeasure$-measurable functions, for a given finite Borel measure $\lpmeasure$ on a compact domain $\outdomain$. We will assume the search space $\Sspace\subset\infspace$ is finite. The true operator $\nnop_*:\infspace\to\outfspace$ will be assumed to be a sample from a vector-valued Gaussian process $\nnop_* \sim \gp(0, \opkernel)$, where the operator-valued kernel $\opkernel: \infspace\times\infspace\to\lopspace(\outfspace)$ is given by the neural operator's infinite-width limit in \autoref{thr:kernel}. Observations $\observation = \obsop\nnop_*(\infun) + \vnoise$ are assumed to be corrupted by i.i.d. zero-mean Gaussian noise, $\vnoise \sim \normal(0, \vnCov)$, where $\vnCov$ is positive definite on $\obsspace\subseteq\R^\obsdim$.

We adapt state-of-the-art regret bounds for GP-TS \citep{Takeno2024} to an exact version of NOTS.  To do so, we first observe that, for a linear functional $\objective \in \lopspace(\outfspace, \R)$, the composition with a Gaussian random operator $\nnop_* \sim \gp(\widehat{\nnop}, \opkernel)$ yields a scalar-valued GP, i.e., $\objective \circ \nnop_* \sim \gp(\objective \circ \widehat{\nnop}, \objective^\transpose \opkernel \objective)$, where $\objective^\transpose \opkernel \objective:(\infun,\infun')\mapsto \objective (\opkernel(\infun,\infun')\objective)$. We can then extend GP-TS regret bounds to the case of operators.

\begin{restatable}{proposition}{regretbound}
    \label{thm:regretbound}
    Let $\objective:\outfspace\to\R$ be a bounded linear functional such that $\objective = \tilde{\objective} \circ \obsop$, where $\tilde{\objective}:\obsspace\to\R$ is linear, and $\nnop_*\sim\gp(0,\opkernel)$. Consider a sequential algorithm selecting $\infun_\iterIdx \in \argmax_{\infun\in\Sspace} \objective(\nnop_\iterIdx(\infun))$ and observing $\observation_\iterIdx = \obsop\nnop_*(\infun_\iterIdx) + \vnoise_\iterIdx$, where $\nnop_\iterIdx \overset{d}{=} \nnop_*|\dataset_\iterIdx$, and $\vnoise_\iterIdx \sim \normal(0, \regFactor\idop)$, for $\iterIdx\in\{1,\dots,\nIterations\}$. Then, this algorithm's expected cumulative regret is such that:
    \begin{equation}
        \bcr_\nIterations \in \bigo(\sqrt{\nIterations\mig_{\objective,\nIterations}}),
    \end{equation}
    where $\mig_{\objective, \nIterations}$ denotes the maximum information gain for a GP  with kernel $\kernel_{\objective} := \objective^\transpose \opkernel \objective$ and given $\nIterations$ observations. 
\end{restatable}

This result shows that NOTS can achieve sublinear cumulative regret in the infinite-width limit with an exact GP posterior sample. The result connects existing GP-TS guarantees to NOTS, and it differs from existing guarantees for other neural network based Thompson sampling algorithms \citep{Zhang2021nts, Dai2022sto}, which explored the scalar case and a frequentist setting (i.e., the objective function being a fixed element of the reproducing kernel Hilbert space defined by the network's neural tangent kernel). In the Bayesian setting, there is also no need for a time-dependent regularization parameter \citep{Dai2022sto}, allowing for a simpler implementation. Yet we note that \autoref{thm:regretbound} concerns the exact GP case. However, \autoref{thr:kernel} ensures that a single-hidden-layer randomly initialized neural operator follows a GP in the infinite-width limit, and we show in the appendix that training the last layer via gradient descent approximates a posterior sample, as in previous results for conventional neural networks \citep[App. D]{Lee2019}. \autoref{app:theory} presents proofs and further discussions on limitations and extensions, and a validation experiment can be found in \autoref{app:extra}.

\section{Experiments}

We evaluate our NOTS algorithm on two popular PDE benchmark problems: Darcy flow and a shallow water model. Our results are compared against a series of representative Bayesian optimization and neural Thompson sampling baselines. More details about our implementations and further experiment details can be found in \autoref{app:experiments}. Code for our experiments will be made available online.\footnote{Code repository: \url{https://github.com/csiro-funml/nots}}

\subsection{Algorithms}
We compare NOTS against a series of GP-based and neural network BO algorithms modeling directly the mapping from function-valued inputs $\infun\in\infspace$ (discretized over regular grid) to the scalar-valued functional evaluations $\objective(\nnop_*(\infun))$, besides a trivial random search (RS) baseline. NOTS is implemented with standard and spherical FNOs \citep{bonev2023spherical}, following default library settings for these PDEs \citep{kossaifi2024neural}. We first implemented BO with a 3-layer infinite-width ReLU Bayesian neural network (BNN) model, represented as a GP with the corresponding conjugate kernel. According to \citet{Li2024bnnbo}, these models can achieve optimal performance in high-dimensional settings when compared to other BNN methods. Two versions of this framework are in our experiments, one with log-expected improvement, given its well established competitive performance \citep{Ament2023}, simply denoted as ``BO'' in our plots, and one with Thompson sampling (GP-TS) \citep{Russo2016}. As our experiments are over finite domains, sampling from a scalar GP boils down to sampling from a multivariate normal distribution. Next, we evaluated a version of Bayesian functional optimization (BFO) by encoding input functions in a reproducing kernel Hilbert space (RKHS) via their minimum-norm interpolant and using a squared-exponential kernel over functions which takes advantage of the RKHS structure as in the original BFO \citep{Vien2018}. Lastly, we evaluated sample-then-optimize neural Thompson sampling (STO-NTS), training a 2-layer 256-width fully connected neural network with a regularized least-squares loss \citep{Dai2022sto}. 

\subsection{PDE benchmarks}
\label{sec:benchmarks}
\paragraph{Darcy flow.}
Darcy flow models fluid pressure in a porous medium \citep{Li2021fno}, with applications in contaminant control, leakage reduction, and filtration design. In our setting, the input $a\in\mathcal{C}((0,1)^2,\mathbb{R}_+)$ is the medium’s permeability on a Dirichlet boundary, and the operator $G_\star$ maps $a$ to the pressure field $u\in\mathcal{C}((0,1)^2,\mathbb{R})$. To train $G_\theta$, we generate 1,000 input–output pairs via a finite-difference solver at $16\times16$ resolution. Two materials are considered, leading to a binary grid for $\infun$ and a continuum of pressure values for each $u$ grid cell. More details are in \citet{Li2021fno} and \autoref{app:experiments}.

\paragraph{Shallow water modeling.}
Shallow water models capture the time evolution of fluid mass and discharge on a rotating sphere \citep{bonev2023spherical}. The input $a\in\mathcal{C}(\mathbb{S}^2\!\times\{t=0\},\mathbb{R}^3)$ represents the initial geopotential depth and two velocity components, while the output $u\in\mathcal{C}(\mathbb{S}^2\!\times\{t=\tau\},\mathbb{R}^3)$ gives the state at time $t=\tau$. We train $G_\theta$ on 200 random initial conditions on a $32\times64$ equiangular grid, using a 1,200 s timestep to simulate up to $\tau=$ 6 hours. 

\subsection{Optimization functionals}
We introduce several optimization functionals that are problem-dependent and clarify their physical meaning in the context of the benchmark problems. As we aim to solve a maximization problem, physical quantities to be minimized are defined with a negative sign. The first three functionals were applied to the Darcy flow problem and the last one to shallow water modeling. Note that in both cases, we have the same domain for the PDE solutions $\outfun$ and input functions $\infun$, i.e., $\outdomain=\domain$.

\paragraph{Negative total flow rates \citep{katz1979history} \(f(u, a) = -\int_{\partial\domain} \infun(\location) (\nabla \outfun(\location) \cdot n)d\location\).}  Here \(\partial\domain\) is the boundary of the domain and \(n\) is the outward pointing unit normal vector of the boundary. This functional integrates the volumetric flux \(-a(\location)\nabla u(\location)\) along the boundary, which corresponds to the total flow rate of the fluid. Such an objective can be optimized for leakage reduction and contaminant control.

\paragraph{Negative total pressure \citep{jeong2025optimal} \(f(u) = -\frac{1}{2} \int_\mathcal{\domain} |u(\location)| d\location\).} This objective computes the total fluid pressure over the domain in the Darcy flow system.

\paragraph{Negative total potential energy $\objective(\outfun, \infun) = -\int_\domain \infun(\location)\norm{\nabla\outfun(\location)}^2\diff\location + \int_\domain s(\location)\outfun(\location)\diff\location$.} This functional quantifies the system's total potential energy, balancing the energy dissipated by fluid friction (the first term) against the potential energy supplied by the uniform fluid source (the second term, where $s=1$ is assumed). The minimizer $\infun^*$, therefore, consists of the most hydrodynamically efficient design for the given flow constraints.

\paragraph{Inverse problem \(f(u) = -\frac{1}{2} \norm{ u-u_{\tau}}^2 \).} \(u_\tau\) represents the ground truth solution. This objective is specific to shallow water modeling, as we aim to find the initial condition \(a\) that generates \(u_\tau\) at time $\tau$, which is also a simplification of the assimilation objective in weather forecasting \citep{rabier1998extended, xiao2023fengwu}.

\begin{figure}[t]
    \centering
    \includegraphics[width=0.45\linewidth]{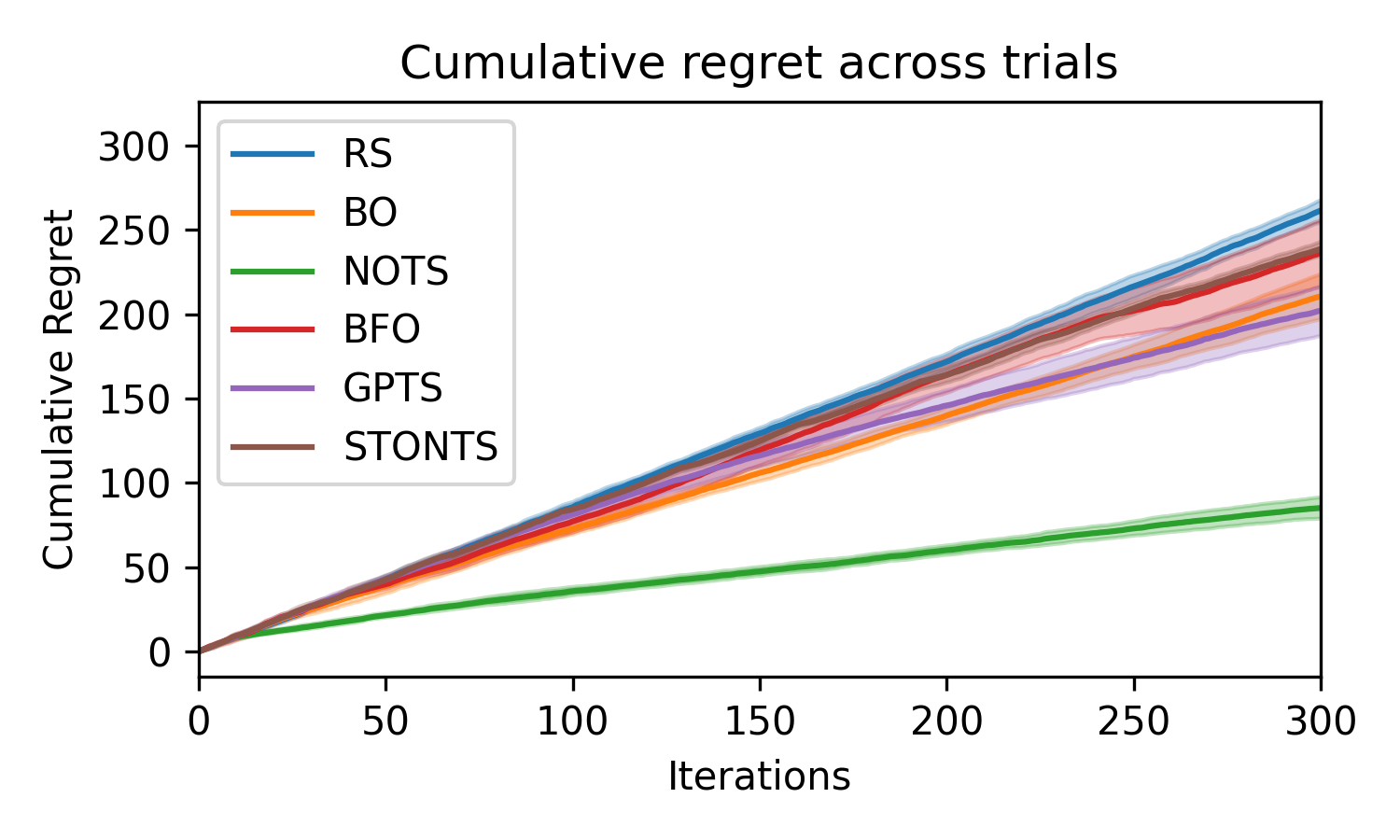}
    \hspace{0.02\linewidth}
    \includegraphics[width=0.45\linewidth]{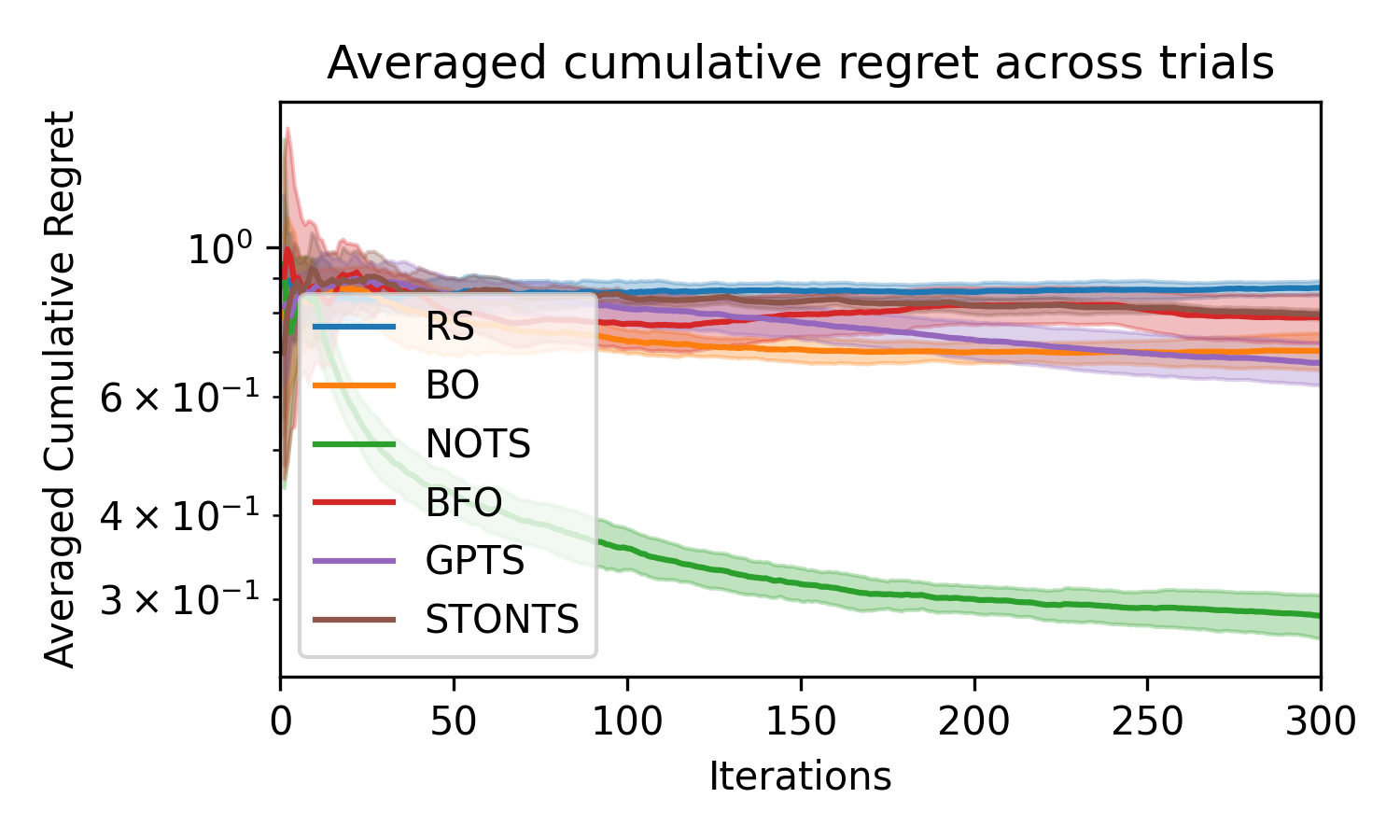}
    \hspace{0.02\linewidth}
    \begin{footnotesize}
    \begin{tabular}{@{\hskip 2em}p{7.6em}@{\hskip 3em}p{7.6em}@{\hskip 3em}p{7.6em}@{\hskip 3em}p{7.6em}}
    best candidate {input} function $a$&
    best candidate output function $u$&
    worst candidate input function $a$&
    worst candidate output function $u$
    \end{tabular}
    \end{footnotesize}
    \includegraphics[width=0.96\linewidth, trim={0 0 0 1.1cm},clip]{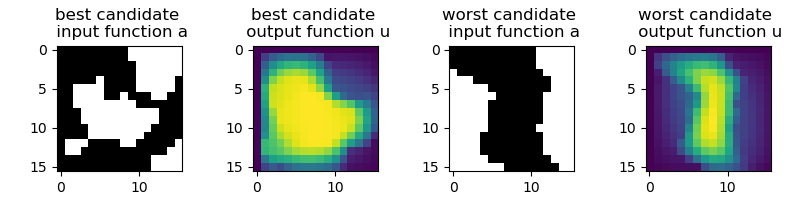}

    \caption{Darcy flow rate optimization. Overlay of cumulative regret (top left) and its average (top right) metrics across trials for the negative total flow rates case in the Darcy flow problem. The shaded areas correspond to one standard deviation across 10 trials. The corresponding input-output functions that achieved the best and worst flow rates are presented (bottom). White regions \(a(x)=1\) 
    means fully open permeability and black regions \(a(x)=0\) represents impermeable pore material. The output function suggests pressure field where brighter color indicates higher pressure.
    }
    \label{fig:regret-darcy-flow-rates}
\end{figure}

\begin{figure}[t]
    \centering
    \subfigure[Pressure]{\includegraphics[width=0.45\linewidth]{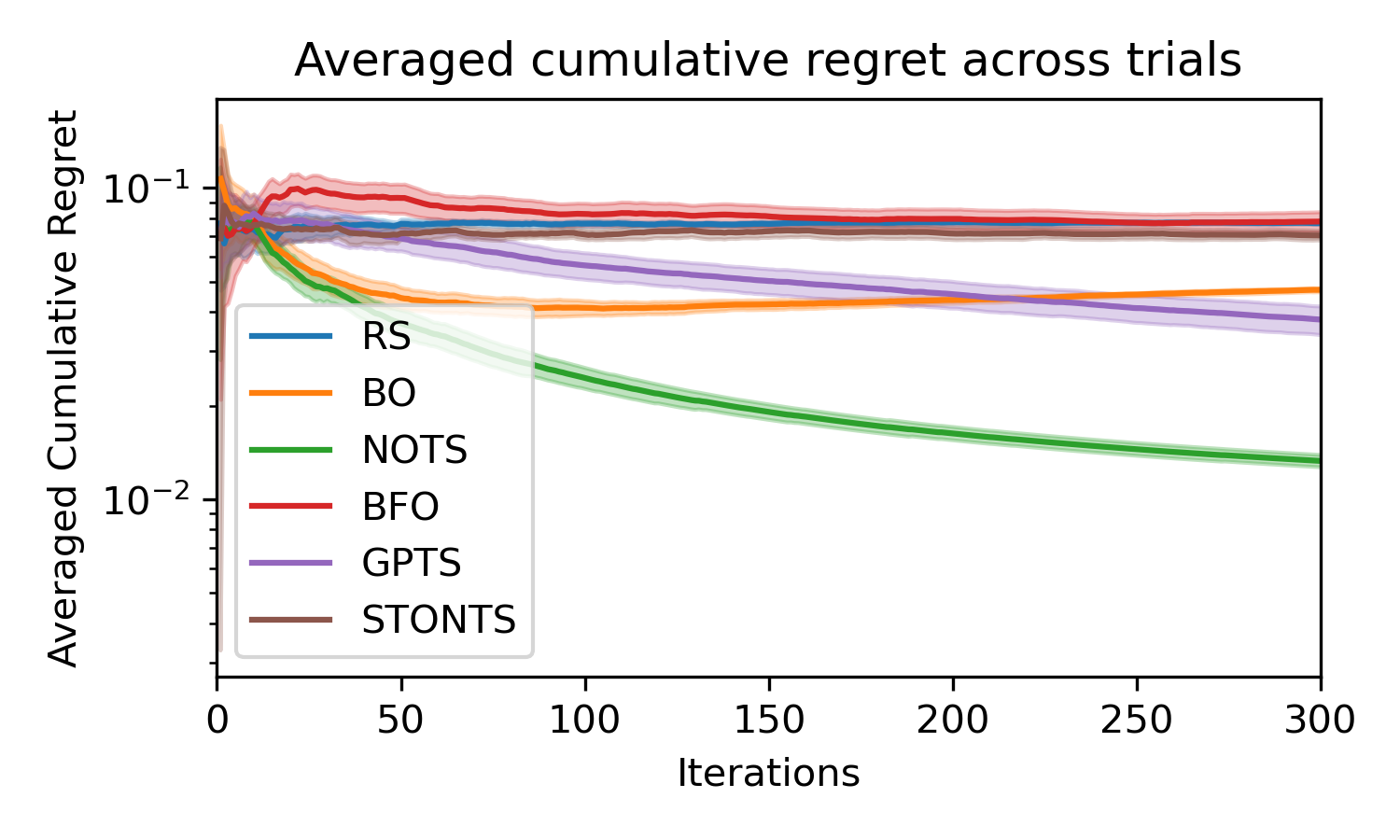} \label{fig:regret-darcy-flow-pressure}}
    \subfigure[Potential energy]{\includegraphics[width=0.45\linewidth]{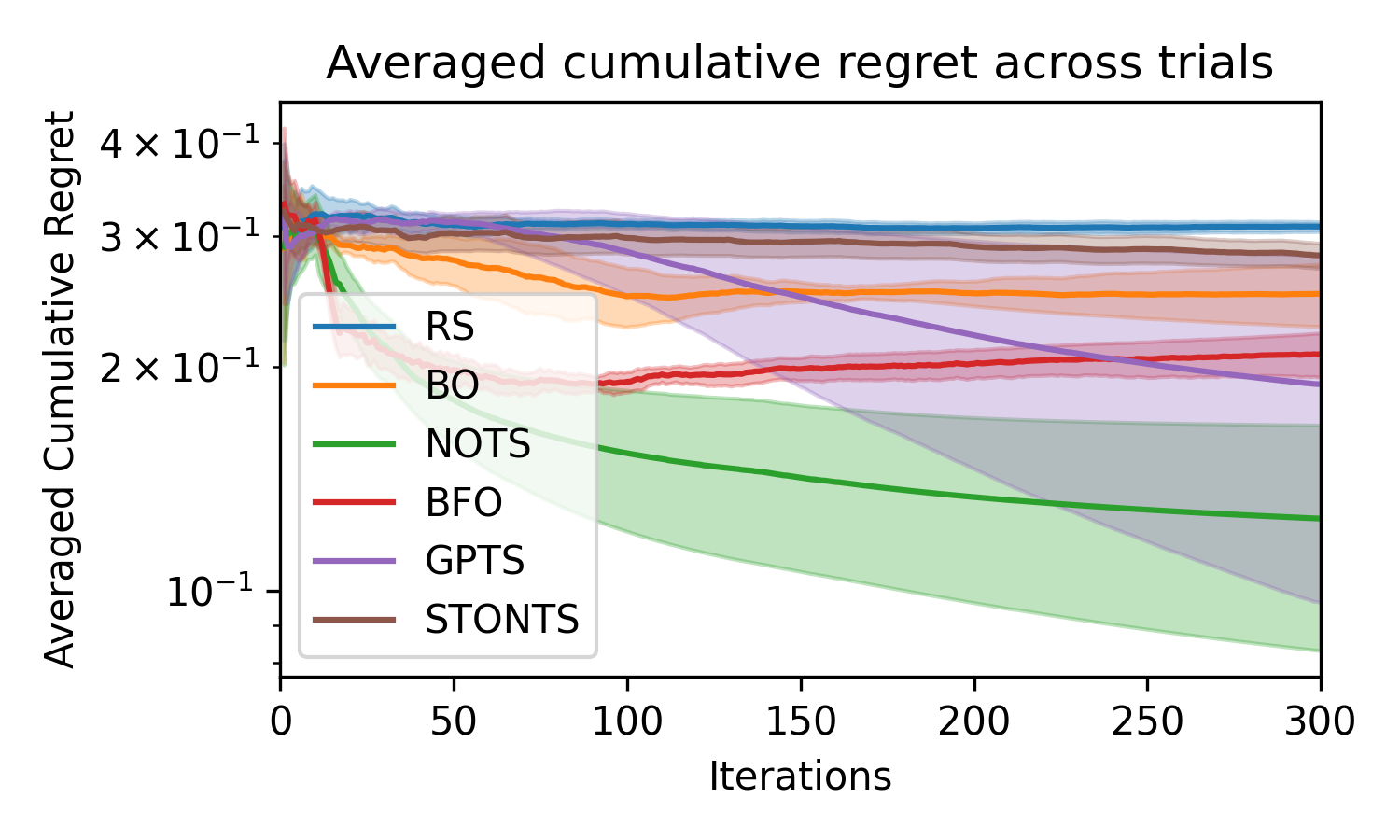} \label{fig:regret-darcy-flow-energy}}

    \caption{Darcy flow pressure \protect\subref{fig:regret-darcy-flow-pressure} and potential energy \protect\subref{fig:regret-darcy-flow-energy} optimization problems averaged cumulative regret. The shaded areas correspond to one standard deviation across 10 trials.}
    \label{fig:regret-darcy-flow-power-pressure}
\end{figure}

\begin{figure}[t]
    \centering
    \includegraphics[width=0.45\linewidth]{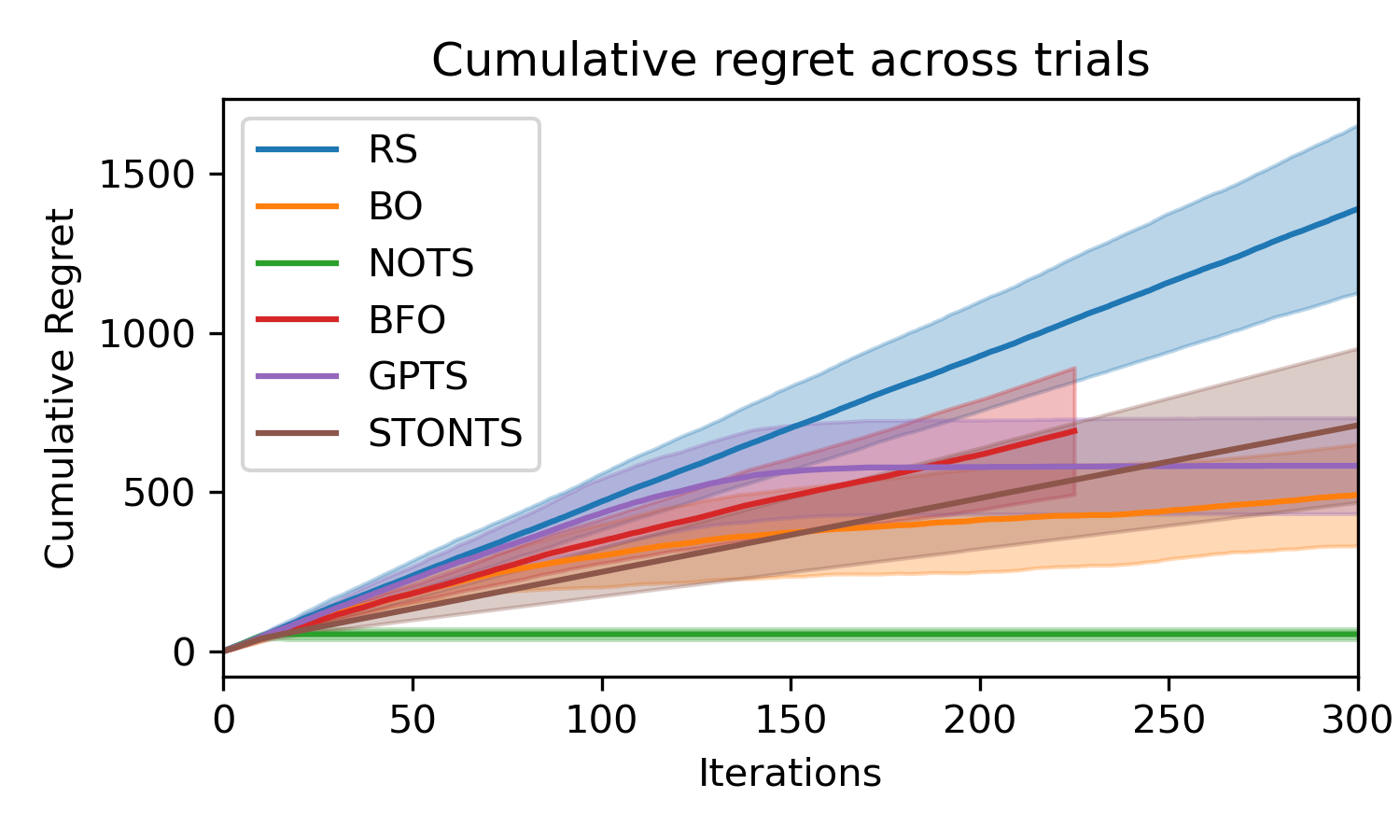}
    \hspace{0.02\linewidth}
    \includegraphics[width=0.45\linewidth]{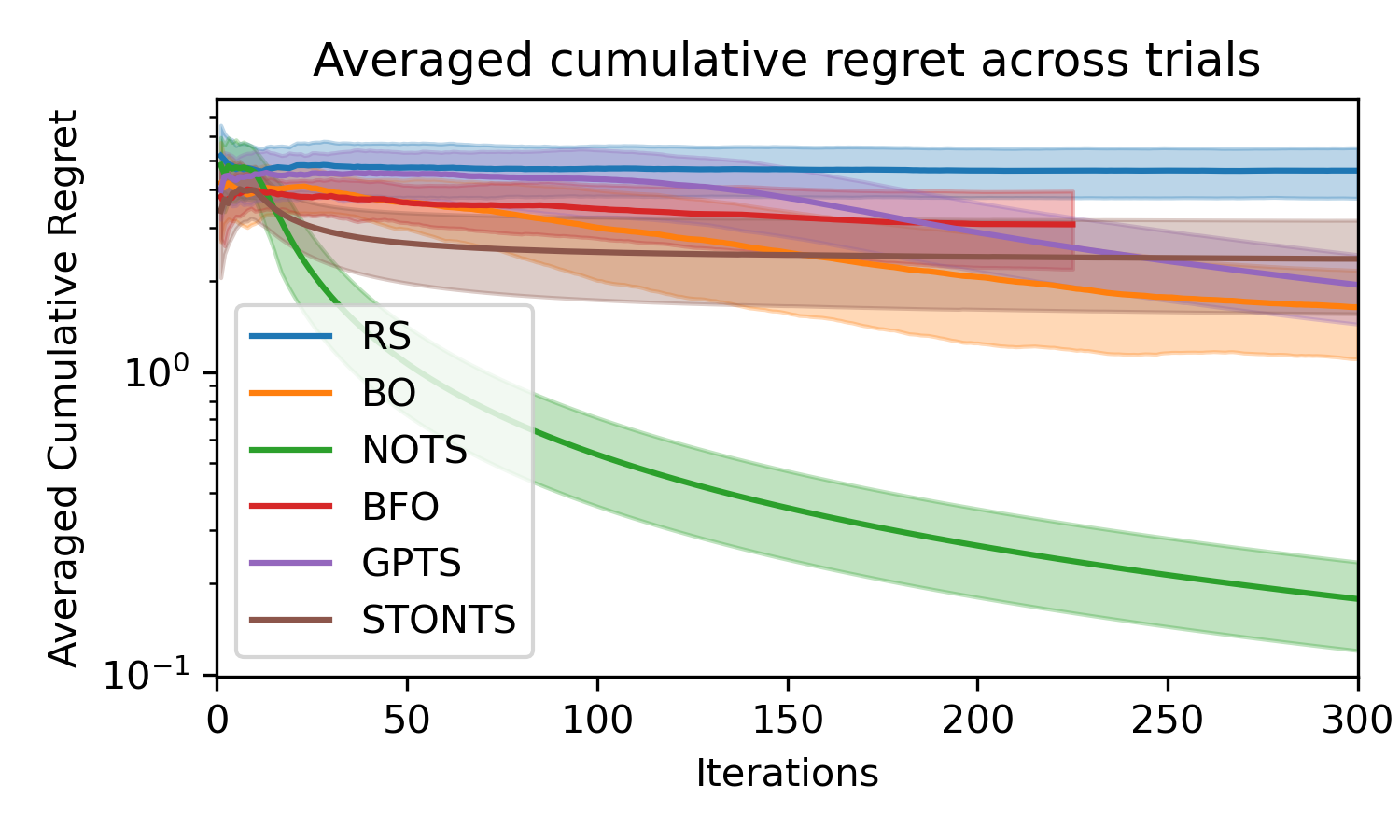}

    \caption{Shallow water inverse problem. Overlay of cumulative regret (left) and its average (right) metrics across trials for the inverse problem in the shallow water data. The shaded areas correspond to one standard deviation across 10 trials.}
    \label{fig:regret-swe-inverse}
\end{figure}

\subsection{Results}
Our results are presented in \autoref{fig:regret-darcy-flow-rates} to \ref{fig:regret-swe-inverse}, comparing the cumulative regret of NOTS against the baselines on different settings of PDE problems and functional objectives. Results are summarized in \autoref{tab:summary} with the final average regret, i.e., $\frac{\Regret_\nIterations}{\nIterations}$, of each method across the different problems.

\begin{table}[t]
\centering
\caption{Results summary: Final average regret of each method and its standard deviation.}
\label{tab:summary}
\begin{tabular}{lcccc}
\hline
Method      & Darcy flow rates       & Darcy flow energy   & Darcy flow pressure   & Shallow water \\
\hline
RS          & \(0.872 \pm 0.022\)      & \(0.309 \pm 0.005\)      & \(0.077 \pm 0.001\)      & \(4.632 \pm 0.876\)        \\
BO          & \(0.703 \pm 0.045\)      & \(0.251 \pm 0.024\)      & \(0.047 \pm 0.001\)      & \(1.639 \pm 0.532\)        \\
BFO         & \(0.788 \pm 0.066\)      & \(0.208 \pm 0.014\)      & \(0.078 \pm 0.006\)      & \(3.076 \pm 0.886\)        \\
GP-TS       & \(0.674 \pm 0.050\)      & \(0.189 \pm 0.093\)      & \(0.038 \pm 0.004\)      & \(1.942 \pm 0.502\)        \\
STO-NTS     & \(0.068 \pm 0.002\)      & \(0.282 \pm 0.011\)      & \(0.068 \pm 0.002\)      & \(2.329 \pm 0.800\)        \\
NOTS        & \(\mathbf{0.012 \pm 0.001}\)      & \(\mathbf{0.125 \pm 0.042}\)      & \(\mathbf{0.012 \pm 0.001}\)      & \(\mathbf{0.134 \pm 0.043}\)        \\
\hline
\end{tabular}
\end{table}

In \autoref{fig:regret-darcy-flow-rates}, we present our results for the flow rate optimization problem in the Darcy flow PDE benchmark. The results clearly show that GP-based BO methods struggle in this high-dimensional setting, while NOTS (ours) is able to consistently find optimal solutions. As described in \autoref{sec:benchmarks}, input functions $\infun\in\infspace$ for Darcy flow are binary masks representing two materials of different permeability which are discretized over a 2D grid of 16-by-16 sampling locations. Hence, when applied to standard GP-based BO methods, the inputs correspond to 256-dimensional vectors, which can be quite high-dimensional for standard GPs.
The optimization results of the input and output functions also show the effectiveness of our approach. In the case of the ``best candidate'' which achieves the lowest total flow rate, the input function shows large contiguous impermeable regions that block fluid outflow and thus generate high interior pressure which can be treated as an ideal design for leakage control. In contrast, the ``worst candidate'' exhibits the highest total flow rates. It has smooth, boundary-connected permeable zones allowing fluid to escape effortlessly. 
Lastly, figures \ref{fig:regret-darcy-flow-pressure} and \ref{fig:regret-darcy-flow-energy} show the results on optimizing pressure and potential energy on Darcy flow. On these functionals, BO and GP-TS can achieve a better performance, recalling their use of the infinite-width BNN kernel, which has shown good performance on high-dimensional problems \citep{Li2024bnnbo}. Yet, we can see significant performance improvements from NOTS with respect to all baselines.

\autoref{fig:regret-swe-inverse} shows our results for the inverse problem on the shallow water PDE benchmark. This setting involves higher dimensional discretized inputs (6144-dimensional when flattened), leading to an extremely challenging problem for GP approaches. In particular, the evaluation of the functional inputs kernel is too computationally intensive for BFO, leading it to crash before 250 iterations are completed. We believe that STO-NTS's low performance is due to architectural limitations, as it uses a simple fully connected network, which leads to a need for higher amounts of data (i.e., more iterations). NOTS, however, is able to learn the underlying physics of the problem to aid its predictions, leading to a more efficient exploration  and higher performance.

\section{Conclusion}
We have developed Neural operator Thompson sampling (NOTS) for optimization problems in function spaces and  shown that it provides significant performance gains in encoding the compositional structure of problems involving black-box operators, such as complex physics simulators or real physical processes. NOTS  also comes equipped with theoretical guarantees, connecting the existing literature on Thompson sampling to this novel setting involving neural operators.

\paragraph{Discussion.} We have shown empirically that using neural operators as surrogates for Thompson sampling can be effective without the need for expensive uncertainty quantification schemes by relying on theoretical results for infinitely wide deep neural networks and their connection with Gaussian processes. Neural operators have allowed for effective representation learning which scales to very high-dimensional settings, where traditional bandits and Bayesian optimization algorithms would struggle. Although GPs typically perform well on Bayesian modeling tasks with low volumes of data, the functional optimization problems we considered have high-dimensional data as both inputs and outputs, rendering the application of traditional multi-output GP models challenging. The basic computational complexity of inference with a vector-valued GP model scales cubically with both the number of data points and the number of output coordinates \citep{Alvarez2012}. For the shallow water PDE, for example, both inputs and outputs lie in a 6144-dimensional space. With 300 iterations, a multi-output GP would have to invert a kernel matrix over more than 1 million data points towards the last iterations. Hence, without specialized kernels and computationally efficient approximations, a traditional GP approach would be unsuitable due to the very large number of outputs.  In contrast, neural operators are specially designed to deal with function-valued input and output data, typically over spatial domains, with linearly scaling computational complexity. Therefore, NOTS can better scale to accommodate longer runs or extensions to batched evaluations than a GP approach, even though we limited experiments to 300 iterations to allow for comparisons against GP baselines.

\paragraph{Limitations and future work.} We note that our current results are focused on the case of finite search spaces and well specified models, which provide a first step towards more general use cases. An extension to continuous domain could, for example, parameterize the set of input functions and optimize such parametric representation or tractable nonparametric extensions \citep{Vien2018, Vellanki2019bfosp}, which might be application specific. Our theoretical analysis only considered the case of a neural operator with a single hidden layer, despite the multi-layer setting in our experiments. These and other limitations are further discussed in \autoref{app:limitations}. As future work, we plan to investigate the generalization of our results to more general settings, such as continuous domains and batched evaluations. Lastly, we note that NOTS also offers a framework for task-to-task amortization and few-shot learning, as operator learning data can be reused across tasks with different objective functionals.

\begin{ack}
    This research was carried out solely using CSIRO's resources. Chai contributed while on sabbatical leave visiting the Machine Learning and Data Science Unit at Okinawa Institute of Science and Technology, and the Department of Statistics in the University of Oxford. This project was supported by resources and expertise provided by CSIRO IMT Scientific Computing.
\end{ack}

\bibliographystyle{unsrtnat}
\bibliography{clean_refs}

\newpage
\section*{NeurIPS Paper Checklist}

\begin{enumerate}

\item {\bf Claims}
    \item[] Question: Do the main claims made in the abstract and introduction accurately reflect the paper's contributions and scope?
    \item[] Answer: \answerYes{} 
    \item[] Justification: Demonstrated by theoretical and experimental results.
    \item[] Guidelines:
    \begin{itemize}
        \item The answer NA means that the abstract and introduction do not include the claims made in the paper.
        \item The abstract and/or introduction should clearly state the claims made, including the contributions made in the paper and important assumptions and limitations. A No or NA answer to this question will not be perceived well by the reviewers. 
        \item The claims made should match theoretical and experimental results, and reflect how much the results can be expected to generalize to other settings. 
        \item It is fine to include aspirational goals as motivation as long as it is clear that these goals are not attained by the paper. 
    \end{itemize}

\item {\bf Limitations}
    \item[] Question: Does the paper discuss the limitations of the work performed by the authors?
    \item[] Answer: \answerYes{} 
    \item[] Justification: Discussion in the appendix and the conclusion
    \item[] Guidelines:
    \begin{itemize}
        \item The answer NA means that the paper has no limitation while the answer No means that the paper has limitations, but those are not discussed in the paper. 
        \item The authors are encouraged to create a separate "Limitations" section in their paper.
        \item The paper should point out any strong assumptions and how robust the results are to violations of these assumptions (e.g., independence assumptions, noiseless settings, model well-specification, asymptotic approximations only holding locally). The authors should reflect on how these assumptions might be violated in practice and what the implications would be.
        \item The authors should reflect on the scope of the claims made, e.g., if the approach was only tested on a few datasets or with a few runs. In general, empirical results often depend on implicit assumptions, which should be articulated.
        \item The authors should reflect on the factors that influence the performance of the approach. For example, a facial recognition algorithm may perform poorly when image resolution is low or images are taken in low lighting. Or a speech-to-text system might not be used reliably to provide closed captions for online lectures because it fails to handle technical jargon.
        \item The authors should discuss the computational efficiency of the proposed algorithms and how they scale with dataset size.
        \item If applicable, the authors should discuss possible limitations of their approach to address problems of privacy and fairness.
        \item While the authors might fear that complete honesty about limitations might be used by reviewers as grounds for rejection, a worse outcome might be that reviewers discover limitations that aren't acknowledged in the paper. The authors should use their best judgment and recognize that individual actions in favor of transparency play an important role in developing norms that preserve the integrity of the community. Reviewers will be specifically instructed to not penalize honesty concerning limitations.
    \end{itemize}

\item {\bf Theory assumptions and proofs}
    \item[] Question: For each theoretical result, does the paper provide the full set of assumptions and a complete (and correct) proof?
    \item[] Answer: \answerYes{}{} 
    \item[] Justification: In the appendix (supplementary material), the reader can find the proofs and full assumptions.
    \item[] Guidelines:
    \begin{itemize}
        \item The answer NA means that the paper does not include theoretical results. 
        \item All the theorems, formulas, and proofs in the paper should be numbered and cross-referenced.
        \item All assumptions should be clearly stated or referenced in the statement of any theorems.
        \item The proofs can either appear in the main paper or the supplemental material, but if they appear in the supplemental material, the authors are encouraged to provide a short proof sketch to provide intuition. 
        \item Inversely, any informal proof provided in the core of the paper should be complemented by formal proofs provided in appendix or supplemental material.
        \item Theorems and Lemmas that the proof relies upon should be properly referenced. 
    \end{itemize}

    \item {\bf Experimental result reproducibility}
    \item[] Question: Does the paper fully disclose all the information needed to reproduce the main experimental results of the paper to the extent that it affects the main claims and/or conclusions of the paper (regardless of whether the code and data are provided or not)?
    \item[] Answer: \answerYes{} 
    \item[] Justification: Details in the appendix.
    \item[] Guidelines:
    \begin{itemize}
        \item The answer NA means that the paper does not include experiments.
        \item If the paper includes experiments, a No answer to this question will not be perceived well by the reviewers: Making the paper reproducible is important, regardless of whether the code and data are provided or not.
        \item If the contribution is a dataset and/or model, the authors should describe the steps taken to make their results reproducible or verifiable. 
        \item Depending on the contribution, reproducibility can be accomplished in various ways. For example, if the contribution is a novel architecture, describing the architecture fully might suffice, or if the contribution is a specific model and empirical evaluation, it may be necessary to either make it possible for others to replicate the model with the same dataset, or provide access to the model. In general. releasing code and data is often one good way to accomplish this, but reproducibility can also be provided via detailed instructions for how to replicate the results, access to a hosted model (e.g., in the case of a large language model), releasing of a model checkpoint, or other means that are appropriate to the research performed.
        \item While NeurIPS does not require releasing code, the conference does require all submissions to provide some reasonable avenue for reproducibility, which may depend on the nature of the contribution. For example
        \begin{enumerate}
            \item If the contribution is primarily a new algorithm, the paper should make it clear how to reproduce that algorithm.
            \item If the contribution is primarily a new model architecture, the paper should describe the architecture clearly and fully.
            \item If the contribution is a new model (e.g., a large language model), then there should either be a way to access this model for reproducing the results or a way to reproduce the model (e.g., with an open-source dataset or instructions for how to construct the dataset).
            \item We recognize that reproducibility may be tricky in some cases, in which case authors are welcome to describe the particular way they provide for reproducibility. In the case of closed-source models, it may be that access to the model is limited in some way (e.g., to registered users), but it should be possible for other researchers to have some path to reproducing or verifying the results.
        \end{enumerate}
    \end{itemize}

\item {\bf Open access to data and code}
    \item[] Question: Does the paper provide open access to the data and code, with sufficient instructions to faithfully reproduce the main experimental results, as described in supplemental material?
    \item[] Answer: \answerYes{} 
    \item[] Justification: Code will be released at \url{https://github.com/csiro-funml/nots}.
    \item[] Guidelines:
    \begin{itemize}
        \item The answer NA means that paper does not include experiments requiring code.
        \item Please see the NeurIPS code and data submission guidelines (\url{https://nips.cc/public/guides/CodeSubmissionPolicy}) for more details.
        \item While we encourage the release of code and data, we understand that this might not be possible, so “No” is an acceptable answer. Papers cannot be rejected simply for not including code, unless this is central to the contribution (e.g., for a new open-source benchmark).
        \item The instructions should contain the exact command and environment needed to run to reproduce the results. See the NeurIPS code and data submission guidelines (\url{https://nips.cc/public/guides/CodeSubmissionPolicy}) for more details.
        \item The authors should provide instructions on data access and preparation, including how to access the raw data, preprocessed data, intermediate data, and generated data, etc.
        \item The authors should provide scripts to reproduce all experimental results for the new proposed method and baselines. If only a subset of experiments are reproducible, they should state which ones are omitted from the script and why.
        \item At submission time, to preserve anonymity, the authors should release anonymized versions (if applicable).
        \item Providing as much information as possible in supplemental material (appended to the paper) is recommended, but including URLs to data and code is permitted.
    \end{itemize}

\item {\bf Experimental setting/details}
    \item[] Question: Does the paper specify all the training and test details (e.g., data splits, hyperparameters, how they were chosen, type of optimizer, etc.) necessary to understand the results?
    \item[] Answer: \answerYes{} 
    \item[] Justification: In the appendix (supplement)
    \item[] Guidelines:
    \begin{itemize}
        \item The answer NA means that the paper does not include experiments.
        \item The experimental setting should be presented in the core of the paper to a level of detail that is necessary to appreciate the results and make sense of them.
        \item The full details can be provided either with the code, in appendix, or as supplemental material.
    \end{itemize}

\item {\bf Experiment statistical significance}
    \item[] Question: Does the paper report error bars suitably and correctly defined or other appropriate information about the statistical significance of the experiments?
    \item[] Answer: \answerYes{} 
    \item[] Justification: Standard deviations reported with the plots
    \item[] Guidelines:
    \begin{itemize}
        \item The answer NA means that the paper does not include experiments.
        \item The authors should answer "Yes" if the results are accompanied by error bars, confidence intervals, or statistical significance tests, at least for the experiments that support the main claims of the paper.
        \item The factors of variability that the error bars are capturing should be clearly stated (for example, train/test split, initialization, random drawing of some parameter, or overall run with given experimental conditions).
        \item The method for calculating the error bars should be explained (closed form formula, call to a library function, bootstrap, etc.)
        \item The assumptions made should be given (e.g., Normally distributed errors).
        \item It should be clear whether the error bar is the standard deviation or the standard error of the mean.
        \item It is OK to report 1-sigma error bars, but one should state it. The authors should preferably report a 2-sigma error bar than state that they have a 96\% CI, if the hypothesis of Normality of errors is not verified.
        \item For asymmetric distributions, the authors should be careful not to show in tables or figures symmetric error bars that would yield results that are out of range (e.g. negative error rates).
        \item If error bars are reported in tables or plots, The authors should explain in the text how they were calculated and reference the corresponding figures or tables in the text.
    \end{itemize}

\item {\bf Experiments compute resources}
    \item[] Question: For each experiment, does the paper provide sufficient information on the computer resources (type of compute workers, memory, time of execution) needed to reproduce the experiments?
    \item[] Answer: \answerYes{} 
    \item[] Justification: Details in the appendix
    \item[] Guidelines:
    \begin{itemize}
        \item The answer NA means that the paper does not include experiments.
        \item The paper should indicate the type of compute workers CPU or GPU, internal cluster, or cloud provider, including relevant memory and storage.
        \item The paper should provide the amount of compute required for each of the individual experimental runs as well as estimate the total compute. 
        \item The paper should disclose whether the full research project required more compute than the experiments reported in the paper (e.g., preliminary or failed experiments that didn't make it into the paper). 
    \end{itemize}
    
\item {\bf Code of ethics}
    \item[] Question: Does the research conducted in the paper conform, in every respect, with the NeurIPS Code of Ethics \url{https://neurips.cc/public/EthicsGuidelines}?
    \item[] Answer: \answerYes{} 
    \item[] Justification: 
    We have read NeurIPS Code of Ethics and carried out our research accordingly.
    \item[] Guidelines:
    \begin{itemize}
        \item The answer NA means that the authors have not reviewed the NeurIPS Code of Ethics.
        \item If the authors answer No, they should explain the special circumstances that require a deviation from the Code of Ethics.
        \item The authors should make sure to preserve anonymity (e.g., if there is a special consideration due to laws or regulations in their jurisdiction).
    \end{itemize}

\item {\bf Broader impacts}
    \item[] Question: Does the paper discuss both potential positive societal impacts and negative societal impacts of the work performed?
    \item[] Answer: \answerYes{} 
    \item[] Justification: 
    Discussed in \autoref{app:impact}.
    \item[] Guidelines:
    \begin{itemize}
        \item The answer NA means that there is no societal impact of the work performed.
        \item If the authors answer NA or No, they should explain why their work has no societal impact or why the paper does not address societal impact.
        \item Examples of negative societal impacts include potential malicious or unintended uses (e.g., disinformation, generating fake profiles, surveillance), fairness considerations (e.g., deployment of technologies that could make decisions that unfairly impact specific groups), privacy considerations, and security considerations.
        \item The conference expects that many papers will be foundational research and not tied to particular applications, let alone deployments. However, if there is a direct path to any negative applications, the authors should point it out. For example, it is legitimate to point out that an improvement in the quality of generative models could be used to generate deepfakes for disinformation. On the other hand, it is not needed to point out that a generic algorithm for optimizing neural networks could enable people to train models that generate Deepfakes faster.
        \item The authors should consider possible harms that could arise when the technology is being used as intended and functioning correctly, harms that could arise when the technology is being used as intended but gives incorrect results, and harms following from (intentional or unintentional) misuse of the technology.
        \item If there are negative societal impacts, the authors could also discuss possible mitigation strategies (e.g., gated release of models, providing defenses in addition to attacks, mechanisms for monitoring misuse, mechanisms to monitor how a system learns from feedback over time, improving the efficiency and accessibility of ML).
    \end{itemize}
    
\item {\bf Safeguards}
    \item[] Question: Does the paper describe safeguards that have been put in place for responsible release of data or models that have a high risk for misuse (e.g., pretrained language models, image generators, or scraped datasets)?
    \item[] Answer: \answerNA{} 
    \item[] Justification: NA. 
    \item[] Guidelines:
    \begin{itemize}
        \item The answer NA means that the paper poses no such risks.
        \item Released models that have a high risk for misuse or dual-use should be released with necessary safeguards to allow for controlled use of the model, for example by requiring that users adhere to usage guidelines or restrictions to access the model or implementing safety filters. 
        \item Datasets that have been scraped from the Internet could pose safety risks. The authors should describe how they avoided releasing unsafe images.
        \item We recognize that providing effective safeguards is challenging, and many papers do not require this, but we encourage authors to take this into account and make a best faith effort.
    \end{itemize}

\item {\bf Licenses for existing assets}
    \item[] Question: Are the creators or original owners of assets (e.g., code, data, models), used in the paper, properly credited and are the license and terms of use explicitly mentioned and properly respected?
    \item[] Answer: \answerYes{} 
    \item[] Justification: 
    PDE benchmarks acknowledged in the main paper. 
    \item[] Guidelines:
    \begin{itemize}
        \item The answer NA means that the paper does not use existing assets.
        \item The authors should cite the original paper that produced the code package or dataset.
        \item The authors should state which version of the asset is used and, if possible, include a URL.
        \item The name of the license (e.g., CC-BY 4.0) should be included for each asset.
        \item For scraped data from a particular source (e.g., website), the copyright and terms of service of that source should be provided.
        \item If assets are released, the license, copyright information, and terms of use in the package should be provided. For popular datasets, \url{paperswithcode.com/datasets} has curated licenses for some datasets. Their licensing guide can help determine the license of a dataset.
        \item For existing datasets that are re-packaged, both the original license and the license of the derived asset (if it has changed) should be provided.
        \item If this information is not available online, the authors are encouraged to reach out to the asset's creators.
    \end{itemize}

\item {\bf New assets}
    \item[] Question: Are new assets introduced in the paper well documented and is the documentation provided alongside the assets?
    \item[] Answer: \answerYes{} 
    \item[] Justification: 
    Code will be released at \url{https://github.com/csiro-funml/nots}.
    \item[] Guidelines:
    \begin{itemize}
        \item The answer NA means that the paper does not release new assets.
        \item Researchers should communicate the details of the dataset/code/model as part of their submissions via structured templates. This includes details about training, license, limitations, etc. 
        \item The paper should discuss whether and how consent was obtained from people whose asset is used.
        \item At submission time, remember to anonymize your assets (if applicable). You can either create an anonymized URL or include an anonymized zip file.
    \end{itemize}

\item {\bf Crowdsourcing and research with human subjects}
    \item[] Question: For crowdsourcing experiments and research with human subjects, does the paper include the full text of instructions given to participants and screenshots, if applicable, as well as details about compensation (if any)? 
    \item[] Answer: \answerNA{} 
    \item[] Justification: 
    NA.
    \item[] Guidelines:
    \begin{itemize}
        \item The answer NA means that the paper does not involve crowdsourcing nor research with human subjects.
        \item Including this information in the supplemental material is fine, but if the main contribution of the paper involves human subjects, then as much detail as possible should be included in the main paper. 
        \item According to the NeurIPS Code of Ethics, workers involved in data collection, curation, or other labor should be paid at least the minimum wage in the country of the data collector. 
    \end{itemize}

\item {\bf Institutional review board (IRB) approvals or equivalent for research with human subjects}
    \item[] Question: Does the paper describe potential risks incurred by study participants, whether such risks were disclosed to the subjects, and whether Institutional Review Board (IRB) approvals (or an equivalent approval/review based on the requirements of your country or institution) were obtained?
    \item[] Answer: \answerNA{} 
    \item[] Justification: 
    NA.
    \item[] Guidelines:
    \begin{itemize}
        \item The answer NA means that the paper does not involve crowdsourcing nor research with human subjects.
        \item Depending on the country in which research is conducted, IRB approval (or equivalent) may be required for any human subjects research. If you obtained IRB approval, you should clearly state this in the paper. 
        \item We recognize that the procedures for this may vary significantly between institutions and locations, and we expect authors to adhere to the NeurIPS Code of Ethics and the guidelines for their institution. 
        \item For initial submissions, do not include any information that would break anonymity (if applicable), such as the institution conducting the review.
    \end{itemize}

\item {\bf Declaration of LLM usage}
    \item[] Question: Does the paper describe the usage of LLMs if it is an important, original, or non-standard component of the core methods in this research? Note that if the LLM is used only for writing, editing, or formatting purposes and does not impact the core methodology, scientific rigorousness, or originality of the research, declaration is not required.
    \item[] Answer: \answerNA{} 
    \item[] Justification: 
    NA.
    \item[] Guidelines:
    \begin{itemize}
        \item The answer NA means that the core method development in this research does not involve LLMs as any important, original, or non-standard components.
        \item Please refer to our LLM policy (\url{https://neurips.cc/Conferences/2025/LLM}) for what should or should not be described.
    \end{itemize}

\end{enumerate}
\newpage
\appendix
\renewcommand{\sectionautorefname}{App.}
\renewcommand{\subsectionautorefname}{App.}

\section*{Appendix}
We now present detailed theoretical background, proofs, experiment settings, and additional results that complement the main paper. \autoref{app:background} reviews essential background on the infinite-width limit of neural networks \citep{Lee2019} and how they relate to Gaussian processes \citep{Rasmussen2006}. We discuss the distinction and applicability of the two main kernel-based frameworks suitable for this type of analysis, namely, the neural tangent kernel (NTK) by \citet{Jacot2018} and the conjugate kernel, also known as the neural network Gaussian process (NNGP) kernel \citep{Neal1996, Lee2018}, which was the main tool for our derivations. \autoref{app:infinitewidth} formulates Fourier neural operators \citep{Li2021fno} under the mathematical abstraction that allowed us to derive the operator-valued kernel for neural operators. The proofs of the main theoretical results then appear in \autoref{app:theory}, including the construction and properties of the operator-valued kernel and the correspondence between trained neural operators and their GP limits. \autoref{app:experiments} describes the PDE benchmarks considered, namely Darcy flow and shallow water equations, alongside the respective objective functionals for optimization tasks. Experiment details, hyperparameter settings, and baseline implementation details are provided in \autoref{app:settings}. \autoref{app:extra} presents results on an experiment with a single-hidden-layer neural operator validating our theoretical results.
Lastly, we discuss limitations and potential broader impact in sections \ref{app:limitations} and \ref{app:impact}, respectively.

\section{Additional background}
\label{app:background}

In this section, we discuss the main differences between the neural tangent kernel \citep{Jacot2018} and the conjugate kernel, also known as the neural network Gaussian process (NNGP) kernel \citep{Lee2019}. Both kernels are used to approximate the behavior of neural networks, but they differ in how they use Gaussian processes to describe the network's behavior.

\subsection{Conjugate kernel (NNGP)}
The conjugate kernel has long been studied in the neural networks literature, describing the correspondence neural networks with randomized parameters and their limiting distribution as the network width approaches infinity \citep{Neal1996, Daniely2017, Lee2018, Matthews2018, Hu2021}. \citet{Neal1996} first showed the correspondence between an infinitely wide single-hidden-layer network and a Gaussian process by applying the central limit theorem. More recent works \citep{Daniely2017, Lee2018, Matthews2018} later showed that the same reasoning can be extended to neural networks with multiple hidden layers. The NNGP kernel is particularly useful for Bayesian inference as it allows us to define GP priors for neural networks and analyze how they change when conditioned on data, providing us with closed-form expressions for an exact GP posterior in the infinite-width limit \citep{Lee2018}.

Define an $\nlayers$-layer neural network $\model(\cdot, \parameters): \domain \to \R$ with $\model(\location; \parameters) := \model_\nlayers(\location;\parameters)$ via the recursion:
\begin{equation}
    \begin{split}
        \model_0(\location; \parameters) &:= \location\\
        \model_{\layer}(\location; \parameters) &:= \actfun_\layer(\Weights_\layer\model_{\layer-1}(\location; \parameters) + \vec\nnbias_\layer) \,, \quad \layer \in \{1,\dots, \nlayers\},
        \label{eq:nn}
    \end{split}
\end{equation}
where $\location\in\domain$ represents an arbitrary input on a finite-dimensional domain $\domain$, $\Weights_\layer \in \R^{\paramdim_{\layer}\times\paramdim_{\layer-1}}$ denotes a layer's weights matrix, $\paramdim_\layer$ is the width of the $\layer$th layer, $\vec\nnbias_\layer \in \R^{\paramdim_\layer}$ is a bias vector, $\actfun_\layer:\R\to\R$ denotes the layer's activation function, which is applied elementwise on vector-valued inputs, and $\parameters := \operatorname{vec}(\{\Weights_\layer, \nnbias_\layer\}_{\layer=1}^\nlayers)$ collects all the network parameters into a vector. Assume $[\Weights_\layer]_{i,j} \sim \normal\left(0,\frac{1}{\paramdim_{\layer-1}}\right)$ and $[\vec\nnbias_\layer]_i\sim\normal(0,1)$, for $i \in \{1,\dots, \paramdim_{\layer}\}$, $j\in\{1,\dots,\paramdim_{\layer-1}\}$ and $\layer\in\{1,\dots,\nlayers\}$, and let $\paramdim := \min\{\paramdim_1,\dots,\paramdim_\nlayers\}$. The NNGP kernel then corresponds to the infinite-width limit of the network outputs covariance function \citep{Lee2018} as:
\begin{equation}
    \kernel_\nngp(\location,\location') := \lim_{\paramdim\to\infty} \expectation[\model(\location; \parameters)\model(\location'; \parameters)], \quad \location,\location'\in\domain \,,
    \label{eq:nngp}
\end{equation}
where the expectation is taken under the parameters distribution. By an application of the central limit theorem, it can be shown \citep{Neal1996, Lee2018} that the neural network converges in distribution to a Gaussian process with the kernel defined above, i.e.:
\begin{equation}
    \model_\parameters \xrightarrow{d} \model \sim \gp(0, \kernel_\nngp)\,,
    \label{eq:nnlimit}
\end{equation}
where $\xrightarrow{d}$ denotes convergence in distribution as $\paramdim\to\infty$. In other words, the randomly initialized network follows a GP prior in the infinite-width limit. Moreover, it follows that, when conditioned on data $\dataset_\nObs := \{\location_i, \observation_i\}_{i=1}^\nObs$, assuming $\observation_i = \model(\location_i) + \obsnoise_i$ and $\obsnoise_i \sim \normal(0, \sigma_\obsnoise^2)$, a Bayesian neural network is distributed according to a GP posterior in the infinite-width limit as:
\begin{align}
    \model|\dataset_\nObs &\sim \gp(\gpmean_\nObs, \kernel_\nObs) \label{eq:gp-posterior}\\
    \gpmean_\nObs(\location) := \expectation[\model(\location) \mid \dataset_\nObs] &= \vkernel_\nObs(\location)^\transpose (\Kernel_\nObs + \sigma_\obsnoise^2\eye)^{-1}\observations_\nObs\\
    \kernel_\nObs(\location,\location') := \covariance[\model(\location), \model(\location') \mid \dataset_\nObs] &= \kernel(\location, \location') - \vkernel_\nObs(\location)^\transpose(\Kernel_\nObs + \sigma_\obsnoise^2\eye)^{-1}\vkernel_\nObs(\location'),
\end{align}
for any $\location, \location'\in \domain$, where $\Kernel_\nObs := [\kernel(\location_i, \location_j)]_{i, j = 1}^\nObs \in \R^{\nObs\times\nObs}$, $\vkernel_\nObs(\location) := [\kernel(\location_i, \location)]_{i=1}^\nObs \in \R^\nObs$, $\observations_\nObs := [\observation_i]_{i=1}^\nObs$, and we set $\kernel := \kernel_\nngp$ to avoid notation clutter. Hence, the NNGP kernel allows us to compute exact GP posteriors for neural network models. However, we emphasize that the conjugate kernel should not be confused with the neural tangent kernel \citep{Jacot2018}, which corresponds to the infinite-width limit of $\expectation[\nabla_\parameters\model(\location;\parameters)\cdot\nabla_\parameters\model(\location';\parameters)]$, instead.

\subsection{Neural tangent kernel (NTK)}
The NTK approximates the behavior of a neural network during training via gradient descent by considering the gradients of the network with respect to its parameters \citep{Jacot2018}. Consider an $\nlayers$-layer feedforward neural network $\model_\parameters: \domain \to \R$ as defined in \autoref{eq:nn}. In its original formulation, \citet{Jacot2018} applied a scaling factor of $\frac{1}{\sqrt{\paramdim}}$ to the output of each layer to ensure asymptotic convergence in the limit $\paramdim\to\infty$ of the network trained via gradient descent. However, later works showed that standard network parameterizations (without explicit output scaling) also converge to the same limit as long as a LeCun or Kaiming/He type of initialization scheme is applied to the parameters with appropriate scaling of the learning rates \citep{Lee2019, Liu2020}, which ensure bounded variance in the infinite-width limit. The NTK describes the limit:
\begin{equation}
    \kernel_\ntk(\location, \location') = \lim_{\paramdim \to \infty} \expectation[\nabla_\parameters \model_\parameters(\location) \cdot \nabla_\parameters \model_\parameters(\location')]\,,
    \label{eq:ntk}
\end{equation}
for any $\location,\location' \in \domain$, where the expectation is taken under the parameters initialization distribution. Under mild assumptions, the trained network's output distribution converges to a Gaussian process described by the NTK \citep{Jacot2018, Lee2018}. Although originally derived for the unregularized case, applying L2 regularization to the parameters norm yields a GP posterior with a term that can account for observation noise \citep{Ordonez2025observation}. Namely, consider the following loss function:
\begin{equation}
    \loss_\nObs(\parameters) := \sum_{i=1}^{\nObs} (\observation_i - \model_\parameters(\location_i))_2^2 + \regFactor \norm{\parameters - \parameters_0}_2^2\,,
\end{equation}
where $\parameters_0$ denotes the initial parameters. As the network width grows larger, the NTK tells us that the network behaves like a linear model \citep{Jacot2018, Liu2020} as:
\begin{equation}
    \model(\location; \parameters) \approx \model(\location; \parameters_0) + \nabla_{\parameters} \model(\location; \parameters)\big|_{\parameters:=\parameters_0} \cdot (\parameters - \parameters_0)\,, \quad \location\in\domain\,.
\end{equation}
The approximation becomes exact in the infinite width limit within any bounded neighborhood $\ball_\radius(\parameters_0) := \{\parameters \mid \norm{\parameters - \parameters_0} \leq \radius \}$ of arbitrary radius $0 < \radius < \infty$ around $\parameters_0$, as the second-order error term vanishes \citep{Liu2020}. The latter also means that $\nabla_{\parameters} \model(\cdot; \parameters)$ converges to fixed feature map $\feature:\domain\to\Hspace_0$, where $\Hspace_0$ is the Hilbert space spanned by the limiting gradient vectors. With this observation, our loss function can be rewritten as:
\begin{equation}
    \loss_\nObs(\parameters) \approx \sum_{i=1}^\nObs \left(\observation_i - \model(\location_i; \parameters_0) - \nabla_{\parameters} \model(\location_i; \parameters)\big|_{\parameters:=\parameters_0} \cdot (\parameters - \parameters_0) \right)^2 + \regFactor \norm{\parameters - \parameters_0}_2^2\,.
\end{equation}
The minimizer of the approximate loss can be derived in closed form. Applying the NTK then yields the infinite-width model:
\begin{equation}
    \model_\nObs(\location) = \model(\location) + \vkernel_\nObs^\ntk(\location)^\transpose(\Kernel_\nObs^\ntk + \regFactor\eye)^{-1}(\observations_\nObs - \vec\model_\nObs)\,,
\end{equation}
where $\model \sim \gp(0, \kernel_\nngp)$ denotes the network at its random initialization, as defined above, $\vkernel_\nObs^\ntk(\location) := [\kernel_\ntk(\location_i, \location)]_{i=1}^\nObs \in \R^\nObs$, $\Kernel_\nObs^\ntk := [\kernel_\ntk(\location_i, \location_j)]_{i, j = 1}^\nObs \in \R^{\nObs\times\nObs}$, and $\vec\model_\nObs := [\model(\location_i)]_{i=1}^\nObs \in \R^\nObs$. Now applying the GP limit to the randomly initialized network $\model$ \citep{Lee2019, Ordonez2025observation}, we have that:
\begin{align}
    \model_\nObs &\sim \gp(\hat\gpmean_\nObs, \hat\kernel_\nObs) \label{eq:ntk-posterior}\\
    \hat\gpmean_\nObs(\location) &= \vkernel_\nObs^\ntk(\location)^\transpose(\Kernel_\nObs^\ntk + \regFactor\eye)^{-1}\observations_\nObs\\
    \begin{split}
        \hat\kernel_\nObs(\location,\location') &= \kernel(\location, \location')
        + \vkernel_\nObs^\ntk(\location)^\transpose (\Kernel_\nObs^\ntk + \regFactor\eye)^{-1} \mat\Kernel_\nObs (\Kernel_\nObs^\ntk + \regFactor\eye)^{-1}\vkernel_\nObs^\ntk(\location') \\
        &\quad - \vkernel_\nObs^\ntk(\location)^\transpose(\Kernel_\nObs^\ntk + \regFactor\eye)^{-1} \vkernel_\nObs(\location') - \vkernel_\nObs(\location)^\transpose(\Kernel_\nObs^\ntk + \regFactor\eye)^{-1}\vkernel_\nObs^\ntk(\location'),
    \end{split}
\end{align}
where we again set $\kernel := \kernel_\nngp$ to avoid clutter. However, note that such GP model does not generally correspond to a Bayesian posterior. An exception is where only the last linear layer is trained, while the rest are kept fixed at their random initialization; in which case case, the GP described by the NTK and the exact GP posterior according to the NNGP kernel match in the unregularized setting \citep{Lee2019}.

\subsection{Application to Thompson sampling}
For our purpose, it is important to have a Bayesian posterior in order to apply Gaussian process Thompson sampling (GP-TS) \citep{Takeno2024} for the regret bounds in \autoref{thm:regretbound}.
Therefore, we are constrained by existing theories connecting neural networks to Gaussian processes to assume training only the last layer of neural networks of infinite width,
which gives a Bayesian posterior of the NNGP after training. In addition, we had to consider the case of a single hidden layer neural operator, as the usual recursive step applied to derive the infinite-width limit would require an intermediate (infinite-dimensional) function space in our case, making the extension to the multi-layer case not trivial due to the usual finite-dimensional assumptions \citep{Liu2020}.
Nonetheless, the NOTS algorithm suggested by our theory has demonstrated competitive performance in our experiments even in more relaxed settings with a multi-layer model.
Future theoretical developments in Bayesian analysis of neural networks may eventually permit the convergence analysis of the more relaxed settings in our experiments. In any case, we present an experiment with a wide single-hidden-layer model with training only on the last layer in \autoref{app:extra}.

\section{Fourier neural operators under the abstract representation}
\label{app:infinitewidth}
Recalling the definition in the main paper, we consider a single hidden layer neural operator. Let $\paramdim\in\N$ represent the layer's width, $\nnkop:\infspace\to\cspace(\outdomain, \R^{\dimension_\nnkernel})$ denote a (fixed) continuous operator, and $\nnbias_0:\outdomain\to\R^{\dimension_\nnbias}$ denote a (fixed) continuous function. For simplicity, we assume scalar outputs with $\outdim=1$. We consider models of the form:
\begin{equation}
    \nnop_\parameters(\infun)(\outlocation) = \weights_o^\transpose\actfun\left( \Weights_\nnkernel\nnkop(\infun)(\outlocation) + \Weights_\outfun\infun(\nnproj_0(\outlocation)) + \Weights_\nnbias\nnbias_0(\outlocation) \right)\,, \quad \outlocation\in\outdomain\,,
    \label{eq:app-nnop-abstraction}
\end{equation}
where $\parameters := (\weights_o, \Weights_\nnkernel, \Weights_\outfun, \Weights_\nnbias) \in \R^\paramdim\times\R^{\paramdim\times\dimension_\nnkernel}\times\R^{\paramdim\times\indim}\times\R^{\paramdim\times\dimension_\nnbias} =: \paramspace$ represents parameters.

\paragraph{Fourier neural operators.} As an example, we show how the formulation above applies to the Fourier neural operator (FNO) architecture \citep{Li2021fno}. For simplicity, assume that $\domain$ is the $\dimension$-dimensional periodic torus, i.e., $\domain = [0, 2\pi)^\dimension$, and $\outdomain=\domain$. Then any square-integrable function $\infun: \domain \to \C^\indim$ can be expressed as a Fourier series:
\begin{equation}
    \infun(\location) = \sum_{\freq \in \Z^\dimension} \hat\infun(\freq) e^{\imunit\inner{\freq,\location}}, \quad \forall \location\in\domain\,,
\end{equation}
where $\imunit:=\sqrt{-1} \in \C$ denotes the imaginary unit, and $\hat\infun(\freq)$ are coefficients given by the function's Fourier transform $\Fourier: \lpspace{2}(\domain, \C^\indim)\to\lpspace{2}(\Z^\dimension, \C^\indim)$ as:
\begin{equation}
    \hat\infun(\freq) := (\Fourier\infun)(\freq) = \frac{1}{(2\pi)^\dimension}\int_\domain \infun(\location) e^{-\imunit\inner{\freq,\location}}\diff\location\,, \quad \freq\in\Z^\dimension\,.
\end{equation}
For a translation-invariant kernel $\nnkernel(\location,\location') = \nnkernel(\location-\location')$, applying the convolution theorem, the integral operator can be expressed as:
\begin{equation}
    \begin{split}
        \int_{\domain}
            \nnkernel(
                \cdot, \location
                )
            \infun(\location)\diff\location
            &= \nnkernel * \infun\\
            &= \Fourier^{-1}(\Fourier(\nnkernel) \cdot \Fourier(\infun))\\
            &= \sum_{\freq \in \Z^\dimension} \widehat\nnkernel(\freq) \hat\infun(\freq) e^{\imunit\inner{\freq,\cdot}}
    \end{split}
\end{equation}
In practice, function observations are only available at a discrete set of points and the Fourier series is truncated at a maximum frequency $\freq_{\max}\in\Z^\dimension$, which allows one to efficiently compute it via the fast Fourier transform (FFT). Considering these facts, FNOs approximate the integral as \citep{Li2021fno}:
\begin{equation}
    \begin{split}
        \int_{\domain}
            \nnkernel(
                \location, \location'
                )
            \infun(\location')\diff\location'
            &\approx \sum_{n=1}^N \widehat\nnkernel(\freq_n) \hat\infun(\freq_n) e^{\imunit\inner{\freq_n,\location}}, \quad \location\in\outdomain\,,
    \end{split}
\end{equation}
where the $N$ values of $\freq_n$ range from 0 to $\freq_{\max}$ in all $\dimension$ coordinates. Finally, defining $\nnkop$ as:
\begin{equation}
    \begin{split}
        \nnkop: \cspace(\domain,\C^\indim) &\to \cspace(\domain,\C^{N\indim})\\
        \infun &\mapsto
        \begin{bmatrix}
            (\Fourier\infun)(\freq_1) e^{\imunit\inner{\freq_1,\cdot}}\\
            \vdots\\
            (\Fourier\infun)(\freq_N) e^{\imunit\inner{\freq_N,\cdot}}
        \end{bmatrix}\,,
    \end{split}
\end{equation}
and letting $\Weights_\nnkernel = [\widehat\nnkernel(\freq_1),\dots, \widehat\nnkernel(\freq_N)]$, we recover  \autoref{eq:app-nnop-abstraction} for FNOs in the complex-valued case.

For real-valued functions, to ensure that the result is again real-valued, a symmetry condition is imposed on $\widehat\nnkernel$, so that its values for negative frequencies are the conjugate transpose of the corresponding values for positive frequencies. However, we can still represent it via a single matrix of weights, which is simply conjugate transposed for the negative frequencies. Lastly, note that complex numbers can be represented as tuples of real numbers.

\section{Theoretical Analysis}
\label{app:theory}
In this section, we provide the proofs of the theoretical results presented in the main paper.

\subsection{Auxiliary results}

\begin{definition}[Multi-Layer Fully-Connected Neural Network]
    \label{def:fcnn}
    A multi-layer fully-connected neural network with \( L \) hidden layers, input dimension \( \dimension_0 \), output dimension \( \dimension_{L+1} \), and hidden layer widths \( \dimension_1, \ldots, \dimension_L \), is defined recursively as follows. For input \( \location \in \domain \), the pre-activations and activations at layer \( \layer = 1, \ldots, L+1 \) are:
    \begin{align}
        \nninput^{(1)}(\location) &= \Weights^{(0)} \location + \vec\nnbias^{(0)}\\
        \nninput^{(\layer)}(\location) &= \Weights^{(\layer-1)} \actfun(\nninput^{(\layer-1)}(\location)) + \vec\nnbias^{(\layer-1)}, \quad \layer=2,\dots,L,\\
        \nninput^{(L+1)}(\location) &= \Weights^{(L)} \actfun(\nninput^{(L)}(\location)),
    \end{align}
    where \( \Weights^{(\layer)} \in \R^{\dimension_{\layer+1} \times \dimension_\layer} \) are weight matrices, \( \vec\nnbias^{(\layer)} \in \R^{\dimension_{\layer+1}} \) are bias vectors, \( \actfun:\R \to \R \) is a coordinate-wise non-linearity, and the network output is \( f(\location) = \nninput^{(L+1)}(\location) \). The weights are initialized as \( W_{ij}^{(\layer)} = \left(\frac{\anyconstant_W}{\dimension_\layer}\right)^{1/2} \widehat{W}_{ij}^{(\layer)} \), where \( \widehat{W}_{ij}^{(\layer)} \sim \mu \) with mean 0, variance 1, and finite higher moments, and biases as \( b_i^{(\layer)} \sim \mathcal{N}(0, \anyconstant_b) \), given fixed constants $\anyconstant_W > 0$ and $\anyconstant_b \geq 0$.
\end{definition}

\begin{lemma}[{Infinite-width limit \citep{Hanin2023nngp}}]
    \label{thr:infinite-width-nn}
    Consider a feedforward fully connected neural network as in \autoref{def:fcnn} with non-linearity \( \actfun: \R\to\R \) that is absolutely continuous with polynomially bounded derivative. Fix the input dimension \( \dimension_0 \), the output dimension \( \dimension_{L+1} \), the number of layers \( L \), and a compact set \( \domain \subset \R^{\dimension_0} \). As hidden layer widths \( \dimension_1, \ldots, \dimension_L \to \infty \), the random field \( x \mapsto f(x) \) converges weakly in \( \cspace(\domain, \R^{\dimension_{L+1}}) \) to a centered Gaussian process with covariance \( \mat{\kernel}^{(L+1)}:\domain\times\domain\to\R^{\dimension_{\nlayers+1} \times \dimension_{\nlayers+1}} \) defined recursively by:
    \begin{equation}
        \mat{\kernel}^{(\layer+1)}(x, x') = \anyconstant_b \eye + \anyconstant_W \, \expectation_{(\nninput,\nninput')} \left[ \actfun(\nninput) \otimes \actfun(\nninput') \right],
    \end{equation}
    where \( (\nninput, \nninput') \sim \mathcal{N} \left( 0, \begin{bmatrix}
        \mat{\kernel}^{(\layer)}(x, x) & \mat{\kernel}^{(\layer)}(x, x') \\
        \mat{\kernel}^{(\layer)}(x, x') & \mat{\kernel}^{(\layer)}(x', x')
    \end{bmatrix} \right) \) for \( \layer \geq 2 \), with the initial condition for \( \layer=1 \) determined by the first-layer weights and biases.
\end{lemma}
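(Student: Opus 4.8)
\emph{Proof plan.} This is the infinite-width Gaussian-process limit of \citep{Hanin2023nngp}; I sketch the route I would take. The plan is to prove weak convergence of the random field $\location\mapsto\fValue(\location)$ in $\cspace(\domain,\R^{\dimension_{L+1}})$ via Prokhorov's theorem, which here reduces to two ingredients: (i) the finite-dimensional marginals of $\fValue$ converge to the Gaussian law with covariance $\mat{\kernel}^{(L+1)}$ from the statement, and (ii) the laws of $\fValue$ form a tight family in $\cspace(\domain,\R^{\dimension_{L+1}})$. Since (i) determines the limit uniquely, (i) and (ii) together give the conclusion.

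\emph{Finite-dimensional marginals.} I would fix inputs $\location_1,\dots,\location_m\in\domain$ and induct on the layer index $\layer$. The base case $\layer=1$ is explicit, since $\nninput^{(1)}(\location_k)=\Weights^{(0)}\location_k+\vec\nnbias^{(0)}$ is an affine image of the i.i.d.\ first-layer weights and biases, with a joint law fixed by $\mu$, $\anyconstant_W$, $\anyconstant_b$ and the finite input dimension $\dimension_0$. For the inductive step I would condition on layer $\layer$, whereupon each coordinate
\[
\nninput^{(\layer+1)}_i(\location_k)=\sum_{j=1}^{\dimension_\layer} W^{(\layer)}_{ij}\,\actfun\bigl(\nninput^{(\layer)}_j(\location_k)\bigr)+b^{(\layer)}_i
\]
becomes a sum of $\dimension_\layer$ independent, mean-zero summands indexed by $j$, each of size $O(\dimension_\layer^{-1/2})$ by the scaling of \autoref{def:fcnn}. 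A triangular-array central limit theorem --- with the Lindeberg condition furnished by the finite higher moments of $\mu$ and the polynomial growth of $\actfun$ --- shows $\bigl(\nninput^{(\layer+1)}_i(\location_k)\bigr)_{i,k}$ is asymptotically Gaussian with conditional covariance $\anyconstant_b\eye+\anyconstant_W\,\tfrac{1}{\dimension_\layer}\sum_j\actfun\bigl(\nninput^{(\layer)}_j(\location_k)\bigr)\otimes\actfun\bigl(\nninput^{(\layer)}_j(\location_{k'})\bigr)$. By the induction hypothesis and a law of large numbers across the width, this empirical average converges (for $\layer\ge2$) to $\anyconstant_W\,\expectation[\actfun(\nninput)\otimes\actfun(\nninput')]$ with $(\nninput,\nninput')$ jointly Gaussian with covariance blocks $\mat{\kernel}^{(\layer)}$, using continuity of $\Sigma\mapsto\expectation_{\normal(0,\Sigma)}[\actfun\otimes\actfun]$; the $\layer=1$-to-$2$ step instead uses the explicit (generally non-Gaussian) first-layer law. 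Either way, this reproduces exactly the recursion in the statement.

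\emph{Tightness.} Here I would establish a Kolmogorov--Chentsov moment bound $\expectation\bigl[\flexnorm{\fValue(\location)-\fValue(\location')}_2^{2p}\bigr]\le\anyconstant_p\,\flexnorm{\location-\location'}_2^{2p\beta}$ for suitable $p,\beta$ with $2p\beta>\dimension_0$ and $\anyconstant_p$ independent of the widths $\dimension_1,\dots,\dimension_L$, then let Kolmogorov--Chentsov supply the uniform modulus of continuity that yields tightness. The moment bound propagates through the layers: $\actfun$ has polynomially bounded derivative, hence is locally Lipschitz, the pre-activations stay in a set whose radius is controlled in all moments (inputs range over the compact $\domain$, and the $\dimension_\layer^{-1/2}$ scaling keeps the layer-to-layer operator norms bounded in expectation), and the base estimate $\expectation\bigl[\flexnorm{\nninput^{(1)}(\location)-\nninput^{(1)}(\location')}_2^{2p}\bigr]\le\anyconstant\,\flexnorm{\location-\location'}_2^{2p}$ is immediate. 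Combining the two ingredients, the whole sequence converges weakly to the centered Gaussian process with covariance $\mat{\kernel}^{(L+1)}$.

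\emph{Main obstacle.} The hard part is that the widths $\dimension_1,\dots,\dimension_L$ must be sent to infinity \emph{simultaneously} rather than one after another, so the Gaussian approximation at layer $\layer+1$ has to be made quantitative, with error terms uniform in the already-large lower widths, so that the $L$ accumulated layerwise errors vanish as $\min_\layer\dimension_\layer\to\infty$. Making this precise is exactly where I would need to invoke (or reprove) the moment/cumulant machinery of \citep{Hanin2023nngp}, or the exchangeability argument of \citet{Matthews2018}; the remaining pieces are a fairly routine assembly of the triangular-array CLT, the law of large numbers, and the Kolmogorov--Chentsov continuity criterion.
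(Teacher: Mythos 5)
The paper does not prove this lemma at all: it is imported as an auxiliary result, cited directly from \citet{Hanin2023nngp}, so there is no in-paper argument to compare against. Your sketch correctly reconstructs the standard proof strategy behind that reference --- weak convergence in $\cspace(\domain,\R^{\dimension_{L+1}})$ via convergence of finite-dimensional marginals (layerwise induction with a conditional triangular-array CLT plus a law of large numbers for the empirical covariance) together with tightness from a width-uniform Kolmogorov--Chentsov moment bound --- and you correctly handle the non-Gaussian first layer and flag the genuine technical obstacle, namely that all widths go to infinity simultaneously so the layerwise Gaussian approximation errors must be controlled uniformly. As you acknowledge, the proposal remains a plan rather than a complete proof, since that quantitative step is exactly what you defer to the moment/cumulant machinery of \citet{Hanin2023nngp} or the exchangeability argument of \citet{Matthews2018}; this is consistent with the paper's own decision to cite the result rather than reprove it.
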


\begin{lemma}[{Thm. 3.1 in \citet{Takeno2024}}]
    \label{thr:gp-ts-regret}
    Let $\objective \sim \gp(0, \kernel)$, where $\kernel: \domain\times\domain \to \R$ is a positive-definite kernel on a finite $\domain$. Then the Bayesian cumulative regret of GP-TS is such that:
    \begin{equation*}
        \bcr_\nIterations \in \bigo(\sqrt{\nIterations\mig_\nIterations})\,,
    \end{equation*}
    where $\mig_\nIterations$ denotes the maximum information gain after $\nIterations$ iterations with the GP model.
\end{lemma}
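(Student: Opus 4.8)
The plan is to prove this as a standard Bayesian Thompson sampling regret bound, exploiting the \emph{probability matching} property of posterior sampling together with Gaussian credible intervals, and then reducing the accumulated posterior uncertainty to the maximum information gain $\mig_\nIterations$. Throughout, let $\filtration_{\iterIdx-1}$ denote the $\sigma$-algebra generated by the history $\{(x_\iterIdx, \observation_\iterIdx)\}$ collected before round $\iterIdx$, and let $\gpmean_{\iterIdx-1}$ and $\sigma_{\iterIdx-1}$ be the GP posterior mean and standard deviation, which are $\filtration_{\iterIdx-1}$-measurable. Because the domain $\domain$ is finite, every concentration step below reduces to a union bound over the $\card\domain$ points and the $\nIterations$ rounds, so no covering-number or discretization argument is needed.

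First I would record the key structural fact. Since the Thompson sample $\random\objective_\iterIdx$ is drawn from the exact posterior, which is precisely the conditional law of $\objective$ given $\filtration_{\iterIdx-1}$, the selected point $x_\iterIdx = \argmax_{x\in\domain} \random\objective_\iterIdx(x)$ and the maximiser $x^* = \argmax_{x\in\domain} \objective(x)$ are identically distributed conditionally on $\filtration_{\iterIdx-1}$. Consequently, for any $\filtration_{\iterIdx-1}$-measurable $\ucb_\iterIdx:\domain\to\R$ we have $\expectation[\ucb_\iterIdx(x^*)\mid\filtration_{\iterIdx-1}] = \expectation[\ucb_\iterIdx(x_\iterIdx)\mid\filtration_{\iterIdx-1}]$. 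I would take $\ucb_\iterIdx(x) := \gpmean_{\iterIdx-1}(x) + \beta_\iterIdx\,\sigma_{\iterIdx-1}(x)$ as an upper confidence bound, with $\beta_\iterIdx$ chosen so that the two-sided credible bound $\lvert \objective(x) - \gpmean_{\iterIdx-1}(x)\rvert \le \beta_\iterIdx\,\sigma_{\iterIdx-1}(x)$ holds simultaneously over all $x\in\domain$ and $\iterIdx\le\nIterations$ with high probability. Since the posterior marginal of $\objective(x)$ is $\normal(\gpmean_{\iterIdx-1}(x), \sigma_{\iterIdx-1}^2(x))$, a Gaussian tail bound together with the union bound over the finite domain yields $\beta_\iterIdx = \bigo(\sqrt{\log(\card\domain\,\iterIdx)})$.

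Next I would decompose the instantaneous regret and invoke probability matching. Writing $\regret_\iterIdx = (\objective(x^*) - \ucb_\iterIdx(x^*)) + (\ucb_\iterIdx(x^*) - \objective(x_\iterIdx))$ and using $\expectation[\ucb_\iterIdx(x^*)\mid\filtration_{\iterIdx-1}] = \expectation[\ucb_\iterIdx(x_\iterIdx)\mid\filtration_{\iterIdx-1}]$ gives $\expectation[\regret_\iterIdx] = \expectation[\objective(x^*) - \ucb_\iterIdx(x^*)] + \expectation[\ucb_\iterIdx(x_\iterIdx) - \objective(x_\iterIdx)]$. On the high-probability credible-bound event the first term is nonpositive, since $\ucb_\iterIdx$ dominates $\objective$ at $x^*$, while the second is at most the interval width $2\beta_\iterIdx\,\sigma_{\iterIdx-1}(x_\iterIdx)$, because $\objective(x_\iterIdx) \ge \gpmean_{\iterIdx-1}(x_\iterIdx) - \beta_\iterIdx\,\sigma_{\iterIdx-1}(x_\iterIdx)$ there; the complementary event contributes only a negligible additive term once we use that $\objective$ is bounded on the finite domain. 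Summing over $\iterIdx$ and applying Cauchy--Schwarz gives $\sum_{\iterIdx=1}^\nIterations \sigma_{\iterIdx-1}(x_\iterIdx) \le \sqrt{\nIterations \sum_{\iterIdx=1}^\nIterations \sigma_{\iterIdx-1}^2(x_\iterIdx)}$. To close the loop I would use the standard information-gain bound: each observation contributes mutual information $\frac{1}{2}\log(1 + \regFactor^{-1}\sigma_{\iterIdx-1}^2(x_\iterIdx))$ about $\objective$, so telescoping and the boundedness of the posterior variance give $\sum_{\iterIdx=1}^\nIterations \sigma_{\iterIdx-1}^2(x_\iterIdx) = \bigo(\mig_\nIterations)$, hence $\bcr_\nIterations = \expectation[\sum_\iterIdx \regret_\iterIdx] = \bigo(\beta_\nIterations\sqrt{\nIterations\mig_\nIterations})$.

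The main obstacle is absorbing the confidence multiplier $\beta_\nIterations$ so as to obtain the clean rate $\bigo(\sqrt{\nIterations\mig_\nIterations})$ exactly as stated: a worst-case union bound leaves a stray $\sqrt{\log(\card\domain\,\nIterations)}$ factor. Removing it is where the Bayesian structure must be used, and is the tightening contributed by \citet{Takeno2024}. Rather than a uniform $\beta_\iterIdx$, one controls the expected excesses $\expectation[(\objective(x^*) - \ucb_\iterIdx(x^*))_+]$ and the corresponding overshoot directly through Gaussian expectations, so that these terms are summable against $\sum_\iterIdx \sigma_{\iterIdx-1}(x_\iterIdx)$ with an effective $\bigo(1)$ multiplier; carrying this refinement through the decomposition while keeping the bad-event contribution under control is the delicate part of the argument.
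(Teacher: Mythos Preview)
The paper does not prove this lemma at all: it is stated verbatim as Theorem~3.1 of \citet{Takeno2024} and invoked as a black box in the proof of \autoref{thm:regretbound}. There is therefore no in-paper argument to compare your proposal against.

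That said, your sketch is a faithful outline of the standard posterior-sampling regret analysis (the Russo--Van Roy decomposition via probability matching and a $\filtration_{\iterIdx-1}$-measurable upper confidence surrogate), and you correctly isolate the one nontrivial step: the naive union-bound choice of $\beta_\iterIdx$ leaves a residual $\sqrt{\log(\card\domain\,\nIterations)}$ factor, and removing it by controlling the Gaussian expected overshoots $\expectation[(\objective(x^*)-\ucb_\iterIdx(x^*))_+]$ directly is exactly the refinement of \citet{Takeno2024}. Your description of that refinement is accurate at the level of a plan, though you do not carry it through. One small slip: saying ``$\objective$ is bounded on the finite domain'' to control the bad-event contribution is not quite right, since $\objective$ is an unbounded Gaussian vector; what you actually need (and what suffices) is that each $\objective(x)$ has finite moments, so the expectation restricted to the low-probability complement is negligible.
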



\subsection{Infinite-width neural operator kernel}
\label{app:infinite-width-limit}

\begin{assumption}
    \label{a:activation}
    The activation function $\actfun:\R\to\R$ is absolutely continuous with derivative bounded almost everywhere.
\end{assumption}

\begin{lemma}[Continuity of limiting GP]
    \label{thr:continuity}
    Let $\nnop_\parameters: \infspace \to \cspace(\outdomain)$ be a neural operator with a single hidden layer, as defined as in \autoref{eq:app-nnop-abstraction}. Assume $\weights_o \sim \normal(\vec 0, \sigma_\parameters^2\eye)$, for $\sigma_\parameters^2 > 0$ such that $\sigma_\parameters^2  \propto\frac{1}{\paramdim}$, and let the remaining parameters have their entries be sampled from a fixed normal distribution. Then, as $\paramdim\to\infty$, the neural operator converges in distribution to a zero-mean Gaussian process with continuous realizations $\nnop:\infspace'\to\cspace(\outdomain)$ on every compact subset $\infspace' \subset \infspace$.
\end{lemma}
\begin{proof}
As shown in \autoref{sec:nnop-limit}, when evaluated at a fixed point $\outlocation\in\outdomain$, a neural operator with a single hidden layer can be seen as:
\begin{equation}
    \nnop_\parameters(\infun)(\outlocation) = \model_\parameters(\vec{\psi}(\infun, \outlocation)), \quad \infun\in\infspace\,,
\end{equation}
where $\vec{\psi}(\infun, \outlocation) := \nninput_\outlocation(\infun)$ is a fixed map $\vec{\psi}:\infspace\times\outdomain\to\nndomain$, with $\nndomain = \R^{\dimension_\nnkernel + \indim + \dimension_\nnbias}$, and $\model_\parameters$ is a conventional feedforward neural network, as defined in \autoref{def:fcnn}. By \autoref{a:activation} and \autoref{thr:infinite-width-nn}, it follows that, as $\paramdim\to\infty$, $\model_\parameters$ converges in distribution to a Gaussian process $\model \sim \gp(0, \kernel_\model)$ with continuous sample paths, i.e., $\prob{\model \in \cspace(\nndomain')}=1$ on every compact $\nndomain'\subset \nndomain$. The continuity of $\vec{\psi}:\infspace\times\outdomain\to\nndomain$ then implies that $\gpfunction := \model \circ \vec{\psi}$ is a zero-mean GP whose sample paths lie almost surely in $\cspace(\infspace'\times\outdomain)$, for a compact $\infspace'\subset\infspace$, as $\outdomain$ is already assumed compact. Therefore, for each $\infun\in\infspace$, we have $\prob{\gpfunction(\infun, \cdot) \in \cspace(\outdomain)} = 1$, so that $\nnop(\infun) := \gpfunction(\infun, \cdot)$ defines an almost surely continuous operator $\nnop:\infspace'\to\cspace(\outdomain)$ on compact $\infspace'\subset\infspace$. The verification that $\nnop$ is a vector-valued GP trivially follows.
\end{proof}

\ckresult*
\begin{proof}[Proof of Proposition \ref{thr:kernel}]
    We start by noting that any continuous function $\outfun \in \cspace(\outdomain)$ is automatically included in $\lpspace{2}(\lpmeasure)$, since $\norm{\outfun}_{\lpspace{2}(\lpmeasure)}^2 = \int_\outdomain \outfun^2(\outlocation) \diff\lpmeasure(\outlocation) \leq \lpmeasure(\outdomain) \norm{\outfun}_\infty^2 < \infty$. Hence, any operator mapping into $\cspace(\outdomain)$ also maps into $\lpspace{2}(\lpmeasure)$ by inclusion.

    Applying \autoref{thr:continuity}, it follows that $\nnop_\parameters \overset{d}{\to} \nnop$, where $\nnop$ is a zero-mean GP, as $\paramdim \to \infty$. Now, given any $\outfun \in \outfspace$, $\infun, \infun'\in\infspace$ and $\outlocation\in\outdomain$, we have that:
    \begin{equation}
        \begin{split}
            (\expectation[\nnop(\infun) \otimes \nnop(\infun')] \outfun)(\outlocation) &= \expectation[\nnop(\infun)\inner{\nnop(\infun'), \outfun}]\\
            &= \left(\expectation\left[\gpfunction(\infun, \cdot) \int_\outdomain \gpfunction(\infun', \outlocation') \outfun(\outlocation') \diff\lpmeasure(\outlocation') \right]\right)(\outlocation)\\
            &= \expectation\left[ \int_\outdomain \gpfunction(\infun, \outlocation)\gpfunction(\infun', \outlocation') \outfun(\outlocation') \diff\lpmeasure(\outlocation') \right]\\
            &= \int_\outdomain \expectation[\gpfunction(\infun, \outlocation)\gpfunction(\infun', \outlocation')] \outfun(\outlocation') \diff\lpmeasure(\outlocation')\\
            &= \int_\outdomain \kernel_\nnop(\infun, \outlocation, \infun', \outlocation') \outfun(\outlocation') \diff\lpmeasure(\outlocation')\,,
        \end{split}
    \end{equation}
    where we applied the linearity of expectations and the correspondence between $\gpfunction: \infspace\times\outdomain\to\R$ and the limiting operator $\nnop:\infspace\to\outfspace$. As the choice of elements was arbitrary, it follows that the above defines an operator-valued kernel $\opkernel_\nnop$. Linearity follows from the expectations. Given any $\infun\in\infspace$, as a positive-semidefinite operator, the operator norm of $\opkernel_\nnop(\infun,\infun)$ is bounded by its trace, such that:
    \begin{equation}
        \norm{\opkernel_\nnop(a,a)}^2 \leq \tr(\opkernel_\nnop(\infun,\infun)) = \expectation[\norm{\nnop(\infun)}_\outfspace^2] = \expectation\left[ \int_\outdomain \gpfunction^2(\infun, \outlocation) \diff\lpmeasure(\outlocation) \right] < \lpmeasure(\outdomain)\expectation[\norm{\gpfunction(a,\cdot)}_\infty^2] \,,
    \end{equation}
    and the last expectation is finite, since $\gpfunction$ is almost surely continuous. Hence, $\opkernel_\nnop(\infun,\infun) \in \lopspace(\outfspace)$.
\end{proof}

\subsection{Regret bound}

\regretbound*
\begin{proof}[{Proof of \autoref{thm:regretbound}}]
    By linearity, it follows that $\objective \circ \nnop_* \sim \gp(0, \objective^\transpose \opkernel \objective)$ for any fixed bounded linear functional $\objective:\outfspace\to\R$. Hence, $\objective \circ \nnop_*$ is equal in distribution to a scalar-valued GP $\model \sim \gp(0, \kernel_\objective)$, where $\kernel_\objective : \infspace \times \infspace \to \R$ is given by:
    \begin{equation*}
        \kernel_\objective(\infun,\infun') = \objective(\opkernel(\infun,\infun')\objective), \quad \infun, \infun'\in\infspace,
    \end{equation*}
    where we implicitly identify the functional $\objective$ with a unique corresponding vector in $\outfspace$, also denoted by $\objective$, by the Riesz representation theorem to apply the operator $\opkernel(\infun,\infun') \in \lopspace(\outfspace)$ to $\objective$. By \autoref{thr:gp-ts-regret}, standard GP-TS on an objective $\model \sim \gp(0, \kernel_\objective)$, a finite domain $\Sspace\subset\infspace$ 
    will have Bayesian cumulative regret $\Regret_\nIterations \in \bigo(\sqrt{\nIterations\mig_{\objective,\nIterations}})$. Note that $\mig_{\objective,\nIterations}$ corresponds to the maximum information gain after $\nIterations$ observations, where each observation is a vector $\observation_\iterIdx\in\obsspace\subseteq\R^\obsdim$, not a scalar as it would be usually assumed in GP-TS. However, the proof of \autoref{thr:gp-ts-regret} in \citet[Thm. 3.1]{Takeno2024} does not depend on the particular form of the posterior mean $\expectation[\model(\infun) \mid \dataset_\iterIdx]$ or variance $\variance[\model(\infun) \mid \dataset_\iterIdx]$, as long as the posterior remains a GP, which still holds. Lastly, the restriction to $\objective = \tilde\objective\circ\obsop$ ensures that the null space of $\obsop$ and $\objective$ coincide, so that observations allow us to learn $\objective$.
\end{proof}

\begin{remark}
    Despite the result above assuming that $\objective$ is only a function of $\nnop(\infun)$, there is a straightforward extension to functionals of the form $\objective: \outfspace\times\infspace\to\R$, as considered in our experiments. We simply need to replace $\nnop:\infspace\to\outfspace$ with the operator $\nnop':\infun\mapsto(\nnop(\infun), \infun)$ by a concatenation with an identity map $\infun\mapsto\infun$, which is deterministic. A similar result then follows after minor adjustments.
\end{remark}

\begin{remark}
    For finite domains, we have that the maximum information gain of GP-TS is at most of logarithmic growth, i.e., $\mig_{\nIterations} \in \bigo(\log\nIterations)$, making the regret bound in \autoref{thm:regretbound} sublinear, regardless of the choice of operator-valued kernel $\opkernel$ and linear functional $\tilde\objective$. Indeed, the kernel matrix $\Kernel_{\nIterations}$ can have at most $\card{\Sspace} < \infty$ non-zero eigenvalues, $\eigval_1 \geq \dots \geq \eigval_\nIterations$, and the maximum eigenvalue $\eigval_1$ is bounded by the trace $\eigval_1 \leq \tr(\Kernel_\nIterations)$, which is at most $\bigo(\nIterations)$. Therefore, the log-determinant is such that: 
    \begin{equation*}
        \begin{split}
            \log\det(\eye + \regFactor^{-1}\Kernel_\nIterations)
            &= \sum_{\iterIdx=1}^\nIterations \log(1 + \regFactor^{-1}\eigval_\iterIdx)\\
            &= \sum_{\iterIdx=1}^{\card{\Sspace}} \log(1 + \regFactor^{-1}\eigval_\iterIdx)\\
            &\leq \card{\Sspace} \log(1 + \regFactor^{-1}\eigval_1)\\
            &\leq \card{\Sspace} \log(1 + \anyconstant\nIterations),
        \end{split}
    \end{equation*}
    for some $\anyconstant > 0$. As $\mig_\nIterations$ is the maximum of $\frac{1}{2}\log\det(\eye + \regFactor^{-1}\Kernel_\nIterations)$, we have $\mig_{\nIterations} \in \bigo(\log\nIterations)$.
\end{remark}

\subsection{Approximate posterior sampling via gradient descent}
\label{sec:sampling-by-gd}
We briefly review the equivalence between posterior sampling and gradient descent when training only the last (or readout) layer of a neural network under a (regularized) least-squares loss and LeCun (or Kaiming He) initialization in the presence of observation noise. We will mainly combine major results from the NTK and NNGP literature \citep{Lee2019, Liu2020, Ordonez2025observation} into the setting of our paper. When only the last layer is trained, the feature maps of the NTK and the NNGP coincide \citep[App. D]{Lee2019}, so that we can follow an NTK type of analysis of how the loss function relates to the network's parameters, while the distribution of the trained network is determined by the NNGP kernel. For simplicity, we focus on the case of a standard, fully connected, scalar-valued neural network, noticing that this analysis is readily extensible to the neural operator case by the techniques we use for our main results.

\paragraph{Random feature model.} When training only the last layer of a neural network, we have the following model at initialization:
\begin{equation}
    \model_0(\location) = \weights_0^\transpose \ffeature(\location),
\end{equation}
where we assume $\weights_0 \sim \normal(\vec 0, \frac{1}{\paramdim}\eye)$ for the initial weights of the readout layer, with $\paramdim$ representing the network width, and given $\location\in\domain$, $\ffeature(\location) \in \R^\paramdim$ represents the output of the last hidden layer of the neural network, which consists of a \emph{random feature} map $\ffeature:\domain\to\R^\paramdim$ under the initialization scheme. Observe that the NNGP kernel is given by:
\begin{equation}
    \kernel_\nngp(\location,\location') := \lim_{\paramdim\to\infty} \expectation[\model_0(\location)\model_0(\location')] = \lim_{\paramdim\to\infty} \frac{1}{\paramdim} \expectation[\ffeature(\location)^\transpose\ffeature(\location')]\,,
\end{equation}
for any $\location,\location'\in\domain$. Note that this is the same limit we obtain if $\weights_0 \sim \normal(\vec 0, \eye)$ and $\ffeature(\location)$ is scaled by $\frac{1}{\sqrt{\paramdim}}$, as in the NTK parameterization \citep{Jacot2018}. Hence, to simplify our derivations, we will adopt the latter in the remainder of this subsection.

\paragraph{Regularized least-squares estimator.} Given $\nObs$ data points $\dataset_\nObs := \{\location_i, \observation_i\}_{i=1}^\nObs \subset \domain\times \R$, we consider the following regularized least-squares loss:
\begin{equation}
    \loss_\nObs(\weights) := \frac{1}{2} \sum_{i=1}^\nObs (\weights^\transpose\ffeature(\location_i) - \observation_i)^2 + \frac{\regFactor}{2} \norm{\weights - \weights_0}^2 = \frac{1}{2}\norm{\features^\transpose\weights - \observations}^2 + \frac{\regFactor}{2} \norm{\weights - \weights_0}^2\,,
    \label{eq:loss-theory}
\end{equation}
where $\features := [\ffeature(\location)_1, \dots, \ffeature(\location_\nObs)] \in \R^{\paramdim\times\nObs}$, $\observations := [\observation_1, \dots, \observation_\nObs]^\transpose \in \R^\nObs$, $\weights_0 \sim \normal(\vec 0, \eye)$, and $\regFactor > 0$ is a regularization factor. We note that, in practice, due to the small initialization variance of order $\frac{1}{\paramdim}$, the initial weights $\weights_0$ will be elementwise very close to zero, especially for large widths $\paramdim$. Therefore, we omit $\weights_0$ from the regularizer in \autoref{eq:training}, as their practical effect is limited, and a simple L2 regularizer is typically efficiently implemented as a weight decay term in optimization algorithms found within modern deep learning frameworks, such as PyTorch \citep{Ansel2024pytorch}.

The loss function in \autoref{eq:loss-theory} is convex in $\weights$ and therefore admits a unique minimizer $\weights_\nObs\in\R^\paramdim$, which we can derive in closed form as:
\begin{equation}
    \begin{split}
        \nabla\loss_\nObs(\weights) &= \features(\features^\transpose\weights - \observations) + \regFactor(\weights - \weights_0)\\
        \nabla\loss_\nObs(\weights)\big|_{\weights=\weights_\nObs} = \vec 0 &\implies (\features\features^\transpose + \regFactor \eye)\weights_\nObs = \features\observations + \regFactor\weights_0\,.
    \end{split}
\end{equation}
For $\regFactor > 0$, the matrix on the left-hand side is positive-definite, and therefore invertible, then:
\begin{equation}
    \weights_\nObs = (\features\features^\transpose + \regFactor\eye)^{-1} (\features\observations + \regFactor\weights_0)\,.
\end{equation}
Suppose $\weights_0 \sim \normal(\vec 0, \eye)$. Then $\weights_\nObs|\observations \sim \normal(\vecmean\weights_\nObs, \vecmean\paramcov_\nObs)$, where:
\begin{equation}
    \vecmean{\weights}_\nObs := \expectation[\weights_\nObs \mid \observations] = (\features\features^\transpose + \regFactor\eye)^{-1} \features\observations\,,
\end{equation}
and the covariance matrix is given by:
\begin{equation}
    \begin{split}
        \vecmean\paramcov_\nObs := \variance[\weights_\nObs \mid \observations] &=
        \variance[(\features\features^\transpose + \regFactor\eye)^{-1} (\features\observations + \regFactor\weights_0) \mid \observations]
        \\
        &=
        \variance[\regFactor(\features\features^\transpose + \regFactor\eye)^{-1} \weights_0]
        \\
        &=\regFactor^2(\features\features^\transpose + \regFactor\eye)^{-1}\variance[\weights_0](\features\features^\transpose + \regFactor\eye)^{-1}
        \\
        &= 
        \regFactor^2(\features\features^\transpose + \regFactor\eye)^{-2}\,,
    \end{split}
\end{equation}
where we used the fact that $\variance[\mat A \weights] = \mat A \variance[\weights] \mat A^\transpose$ for a random vector $\weights$, and we also note that $\variance[\weights_0 \mid \observations] = \variance[\weights_0]$, given that $\weights_0$ is sampled independently of $\observations$.

\paragraph{Alternative derivation.} Another way of deriving the expression above is via the joint distribution between $\weights_\nObs$ and $\observations$.
Assume $\observations = \features^\transpose\weights_* + \vec\obsnoise$, for some $\weights_* \sim \normal(\vec 0, \eye)$ and $\vec\obsnoise \sim \normal(\vec 0, \sigma_\obsnoise^2\eye)$, so that $\paramcov_\observations := \variance[\observations] = \features\features^\transpose + \sigma_\obsnoise^2\eye$. The joint distribution is:
\begin{equation}
    \begin{bmatrix}
        \weights_\nObs\\
        \observations
    \end{bmatrix}
    \sim
    \normal
    \left(
        \begin{bmatrix}
            \vec 0\\
            \vec 0
        \end{bmatrix}
        ,
        \begin{bmatrix}
            (\features\features^\transpose + \regFactor\eye)^{-1} (\features\paramcov_\observations\features^\transpose + \regFactor^2\eye) (\features\features^\transpose + \regFactor\eye)^{-1}
            &
            (\features\features^\transpose + \regFactor\eye)^{-1}\features\paramcov_\observations
            \\
            \paramcov_\observations\features^\transpose(\features\features^\transpose + \regFactor\eye)^{-1}
            &
            \paramcov_\observations
        \end{bmatrix}
    \right).
\end{equation}
The covariance of the joint distribution is obtained from the linear relation between $\weights_\nObs$ and $\observations$ as:
\begin{gather*}
\mat\Sigma_{\weights_\nObs, \observations} =
\begin{bmatrix}
(\features\features^\transpose + \regFactor\eye)^{-1}  & \vec 0\\
\vec 0 & \eye
\end{bmatrix}
\left(
\begin{bmatrix}
\features \\
\eye
\end{bmatrix}
\paramcov_\observations
\begin{bmatrix}
\features \\
\eye
\end{bmatrix}^\transpose
+
\begin{bmatrix}
\regFactor^2\eye & \vec 0\\
\vec 0 & \vec 0
\end{bmatrix}
\right)
\begin{bmatrix}
(\features\features^\transpose + \regFactor\eye)^{-1}  & \vec 0\\
\vec 0 & \eye
\end{bmatrix}.
\end{gather*}
We can see that the matrix above is non-singular and positive definite. In particular, its determinant can be derived as:
\begin{equation*}
    \begin{split}
        \det(\mat\Sigma_{\weights_\nObs, \observations})
        &=
        \det
            \begin{pmatrix}
            (\features\features^\transpose + \regFactor\eye)^{-1}  & \vec 0\\
            \vec 0 & \eye
            \end{pmatrix}
            ^2
        \det
        \left(
            \begin{bmatrix}
            \features \\
            \eye
            \end{bmatrix}
            \paramcov_\observations
            \begin{bmatrix}
            \features \\
            \eye
            \end{bmatrix}^\transpose
            +
            \begin{bmatrix}
            \regFactor^2\eye & \vec 0\\
            \vec 0 & \vec 0
            \end{bmatrix}
        \right)\\
        &=
        \det(\features\features^\transpose + \regFactor\eye)^{-2}
        \det
        \left(
            \begin{bmatrix}
            \features\paramcov_\observations\features^\transpose
            &
            \features \paramcov_\observations\\
            \paramcov_\observations\features^\transpose
            &
            \paramcov_\observations
            \end{bmatrix}
            +
            \begin{bmatrix}
            \regFactor^2\eye & \vec 0\\
            \vec 0 & \vec 0
            \end{bmatrix}
        \right)\\
        &=
        \det(\features\features^\transpose + \regFactor\eye)^{-2}
        \det
        \left(
            \begin{bmatrix}
            \features\paramcov_\observations\features^\transpose + \regFactor^2\eye
            &
            \features \paramcov_\observations\\
            \paramcov_\observations\features^\transpose
            &
            \paramcov_\observations
            \end{bmatrix}
        \right)\\
        &=
        \det(\features\features^\transpose + \regFactor\eye)^{-2}
        \det(\paramcov_\observations)
        \det(
            \features\paramcov_\observations\features^\transpose + \regFactor^2\eye
            -
            \features \paramcov_\observations \paramcov_\observations^{-1} \paramcov_\observations\features^\transpose
        )\\
        &=
        \frac{
            \det(\paramcov_\observations)
            \det(\regFactor^2\eye)
        }
        {
            \det(\features\features^\transpose + \regFactor\eye)^{2}
        }\\
        & > 0\,,
    \end{split}
\end{equation*}
where the inequality holds as long as $\regFactor > 0$ and $\sigma_\obsnoise > 0$.
Conditioning on $\observations$ then yields:
\begin{align}
    \vecmean\weights_\nObs &= (\features\features^\transpose + \regFactor\eye)^{-1} \features\observations\,,
\end{align}
and:
\begin{equation}
    \begin{split}
        \vecmean\paramcov_\nObs &= (\features\features^\transpose + \regFactor\eye)^{-1} (\features\paramcov_\observations\features^\transpose + \regFactor^2\eye) (\features\features^\transpose + \regFactor\eye)^{-1} - (\features\features^\transpose + \regFactor\eye)^{-1}\features\paramcov_\observations\features^\transpose(\features\features^\transpose + \regFactor\eye)^{-1}\\
        &=\regFactor^2(\features\features^\transpose + \regFactor\eye)^{-2}\,.
    \end{split}
\end{equation}
In contrast, even if $\regFactor := \sigma_\obsnoise^2$, note that $\vecmean\paramcov_\nObs$ does not correspond to the exact posterior covariance, which can be derived as:
\begin{align}
    \begin{bmatrix}
        \weights_*\\
        \observations
    \end{bmatrix}
    &\sim
    \normal
    \left(
        \begin{bmatrix}
            \vec 0\\
            \vec 0
        \end{bmatrix}
        ,
        \begin{bmatrix}
            \eye
            &
            \features
            \\
            \features^\transpose
            &
            \features^\transpose\features + \regFactor\eye
        \end{bmatrix}
    \right)\,.
    \\
    \implies \paramcov_\nObs := \variance[\weights_* \mid \observations] &= \eye - \features(\features^\transpose\features + \regFactor\eye)^{-1}\features^\transpose = \regFactor(\features\features^\transpose + \regFactor\eye)^{-1}\,.
    \label{eq:exact-weights-posterior-covariance}
\end{align}

\paragraph{Predictions.} For the predictive equations, note that adding and subtracting $\features\features^\transpose\weights_0$ to the expression for $\weights_\nObs$ yields:
\begin{equation}
    \begin{split}
        \weights_\nObs &= (\features\features^\transpose + \regFactor\eye)^{-1} (\features\observations + \regFactor\weights_0 + \features\features^\transpose\weights_0 - \features\features^\transpose\weights_0)\\
        &= \weights_0 + (\features\features^\transpose + \regFactor\eye)^{-1} (\features\observations - \features\features^\transpose\weights_0)\\
        &= \weights_0 + \features(\features^\transpose\features + \regFactor\eye)^{-1}(\observations - \features^\transpose\weights_0)\,,
    \end{split}
\end{equation}
where we applied the identity $(\eye + \mat{AB})^{-1}\mat A = \mat A (\eye + \mat{BA})^{-1}$.
Hence, letting $\model_\nObs(\location) := \ffeature(\location)^\transpose\weights_\nObs$, we have that:
\begin{equation}
    \model_\nObs(\location) = \model_0(\location) + \ffeature(\location)^\transpose\features(\features^\transpose\features + \regFactor\eye)^{-1}(\observations - \vec\model_0)\,,
\end{equation}
where $\vec\model_0 := \features^\transpose\weights_0 = [\model_0(\location_i)]_{i=1}^\nObs \in \R^\nObs$. In the infinite-width limit, we then have that:
\begin{equation}
    \model_\nObs(\location) = \model_0(\location) + \vec\kernel_\nObs(\location)^\transpose(\Kernel_\nObs + \regFactor\eye)^{-1}(\observations - \vec\model_0)\,,
\end{equation}
where we set $\kernel := \kernel_\nngp$ and adopt the standard GP notation for the kernel vector $\vec\kernel_\nObs$ and matrix $\Kernel_\nObs$.

\paragraph{Underestimated variance.} Now considering $\model_0 \sim \gp(0, \kernel)$, we have that:
\begin{align}
    \expectation[\model_\nObs(\location) \mid \observations] &= \vec\kernel_\nObs(\location)^\transpose(\Kernel_\nObs + \regFactor\eye)^{-1}\observations\\
    \begin{split}
        \variance[\model_\nObs(\location) \mid \observations] &= \kernel(\location,\location) - 2\vec\kernel_\nObs(\location)^\transpose(\Kernel_\nObs + \regFactor\eye)^{-1}\vec\kernel_\nObs(\location)\\
        &\quad + \vec\kernel_\nObs(\location)^\transpose(\Kernel_\nObs + \regFactor\eye)^{-1}\Kernel_\nObs(\Kernel_\nObs + \regFactor\eye)^{-1}\vec\kernel_\nObs(\location)\\
        &= \kernel(\location,\location) - \vec\kernel_\nObs(\location)^\transpose(\Kernel_\nObs + \regFactor\eye)^{-1}\vec\kernel_\nObs(\location) - \regFactor\vec\kernel_\nObs(\location)^\transpose(\Kernel_\nObs + \regFactor\eye)^{-2}\vec\kernel_\nObs(\location)\,,
    \end{split}
    \label{eq:underestimated-posterior-variance}
\end{align}
where the last equality follows by adding and subtracting $\regFactor\eye$ from the $\Kernel_\nObs$ factor in the previous quadratic term.
We can then see that the predictive variance is lower than the exact GP posterior predictive variance by a factor of $\regFactor\vec\kernel_\nObs(\location)^\transpose(\Kernel_\nObs + \regFactor\eye)^{-2}\vec\kernel_\nObs(\location)$. The two match when $\regFactor \to 0$, as in \citet{Lee2019}. However, for the noisy case with $\regFactor >0$, we have this mismatch, as it can also be observed in the results of \citet{Ordonez2025observation}.  
Similarly, for the weights posterior covariance, we have that:
\begin{equation}
    \begin{split}
        \vecmean\paramcov_\nObs
        =\regFactor^2(\features\features^\transpose + \regFactor\eye)^{-2}
        &\preceq \regFactor(\features\features^\transpose + \regFactor\eye)^{-1} = \paramcov_\nObs\\
        \iff \regFactor(\features\features^\transpose + \regFactor\eye)^{-2}
        &\preceq (\features\features^\transpose + \regFactor\eye)^{-1}\\
        \iff \regFactor(\features\features^\transpose + \regFactor\eye)^{-1}
        &\preceq \eye\\
        \iff (\regFactor^{-1}\features\features^\transpose + \eye)^{-1}
        &\preceq \eye\,,
    \end{split}
    \label{eq:underestimated-covariance}
\end{equation}
which holds since $\features\features^\transpose$ is positive semidefinite and $\regFactor > 0$. Hence, in the following we analyze the effect of the underestimated variance on the algorithm's regret.

\paragraph{Effect on the regret bound.} We may bound the effect of the posterior variance mismatch in the regret bound of GP-TS. Let $\paramcov_\iterIdx = \variance[\weights_*|\observations]$ represent the exact posterior covariance matrix (cf. \autoref{eq:exact-weights-posterior-covariance}) after $\iterIdx \geq 1$ iterations, assuming $\regFactor := \sigma_\obsnoise^2$, and denote the exact and the approximate posterior, respectively, as:
\begin{align}
    \pmeasure_\iterIdx &:= \normal(\vecmean\weights_\iterIdx, \paramcov_\iterIdx)\\
    \hat\pmeasure_\iterIdx &:= \normal(\vecmean\weights_\iterIdx, \vecmean\paramcov_\iterIdx)\,.
\end{align}
Correspondingly, we set:
\begin{align}
    \location^* &\in \argmax_{\location\in\domain} \objective(\location)\\
    \location_\iterIdx &\in \argmax_{\location\in\domain} \model_\iterIdx(\location)\,,
\end{align}
assuming $\objective(\location) = \ffeature(\location)^\transpose\weights_*$, for some $\weights_* \sim \normal(\vec 0, \eye)$. The instant regret at iteration $\iterIdx\geq 1$ is then:
\begin{equation}
    \begin{split}
        \expectation[\objective(\location^*) - \objective(\location_\iterIdx)] 
        &= \expectation[\expectation[\objective(\location^*) - \objective(\location_\iterIdx) \mid \dataset_{\iterIdx-1}]]\\
        &= \expectation\left[
            \int_{\R^\paramdim} \int_{\R^\paramdim} \objective(\location^*) - \objective(\location_\iterIdx) \diff\pmeasure_{\iterIdx-1}(\weights_*) \diff\hat\pmeasure_{\iterIdx-1}(\weights_\iterIdx)
        \right]\\
        &= \expectation\left[
            \int_{\R^\paramdim} \int_{\R^\paramdim} 
            (\objective(\location^*) - \objective(\location_\iterIdx))
            \frac{\diff\hat\pmeasure_{\iterIdx-1}}{\diff\pmeasure_{\iterIdx-1}}(\weights_\iterIdx)
            \diff\pmeasure_{\iterIdx-1}(\weights_*) \diff\pmeasure_{\iterIdx-1}(\weights_\iterIdx)
        \right]\\
        &\leq \expectation\left[
            \left\lVert 
                \frac{\diff\hat\pmeasure_{\iterIdx-1}}{\diff\pmeasure_{\iterIdx-1}} 
            \right\lVert_\infty
            \int_{\R^\paramdim} \int_{\R^\paramdim} 
            \objective(\location^*) - \objective(\location_\iterIdx) \diff\pmeasure_{\iterIdx-1}(\weights_*)
            \diff\pmeasure_{\iterIdx-1}(\weights_\iterIdx)
        \right]\,,
    \end{split}
\end{equation}
where we applied H\"{o}lder's inequality, noting that $\objective(\location^*) - \objective(\location_\iterIdx) \geq 0$. Therefore, if the Radon-Nikodym derivative $\frac{\diff\hat\pmeasure_{\iterIdx-1}}{\diff\pmeasure_{\iterIdx-1}}$ is uniformly bounded, the regret bound remains the same. In the finite-width case $\paramdim < \infty$, the density ratio between multivariate normal distributions with the same mean gives us:
\begin{equation}
    \frac{\diff\hat\pmeasure_{\iterIdx}}{\diff\pmeasure_{\iterIdx}}(\weights)
    =
    \sqrt{\frac{\det(\paramcov_\iterIdx)}{\det(\vecmean\paramcov_\iterIdx)}}
    \exp\left(
        - \frac{1}{2}
        (\weights - \vecmean\weights_\iterIdx)^\transpose
        (\vecmean\paramcov_\iterIdx^{-1} - \paramcov_\iterIdx^{-1})
        (\weights - \vecmean\weights_\iterIdx)
    \right)
    \,, \quad \weights\in\R^\paramdim\,.
\end{equation}
As $\vecmean\paramcov_\iterIdx \preceq \paramcov_\iterIdx$ \eqref{eq:underestimated-covariance}, the difference between the inverses $\vecmean\paramcov_\iterIdx^{-1} - \paramcov_\iterIdx^{-1}$ is positive semidefinite. The maximum is then achieved at $\weights = \vecmean\weights_\iterIdx$, yielding:
\begin{equation}
    \begin{split}
        \flexnorm{
            \frac{\diff\hat\pmeasure_{\iterIdx}}{\diff\pmeasure_{\iterIdx}}
        }_\infty
        &= \sqrt{\frac{\det(\paramcov_\iterIdx)}{\det(\vecmean\paramcov_\iterIdx)}}\\
        &= \sqrt{
            \det\left(\paramcov_\iterIdx \vecmean\paramcov_\iterIdx^{-1}\right)
        }\\
        &= \sqrt{
            \det\left(
                \eye + \regFactor^{-1}\features\features^\transpose
            \right)
        }\\
        &= \sqrt{
            \det\left(
                \eye + \regFactor^{-1}\features^\transpose\features
            \right)
        }
    \end{split}
\end{equation}
where we applied Sylvester's determinant identity to third line, and a standard determinant identity yields the last equality. In the infinite-width limit as $\paramdim\to\infty$, we have that $\features^\transpose\features$ converges to $\Kernel_\iterIdx := [\kernel_\nngp(\location_i, \location_j)]_{i,j=1}^\iterIdx$, leading us to:
\begin{equation}
        \flexnorm{
            \frac{\diff\hat\pmeasure_{\iterIdx}}{\diff\pmeasure_{\iterIdx}}
        }_\infty
    = \sqrt{\det(\eye + \regFactor^{-1} \Kernel_\iterIdx)}\,.
\end{equation}
Recall the definition of the maximum information gain \citep{Srinivas2010, Takeno2024}:
\begin{equation}
    \mig_\iterIdx := \max_{\domain_\iterIdx \subset \domain : \card{\domain_\iterIdx} \leq \iterIdx} \frac{1}{2} \log\det(\eye + \regFactor^{-1} \Kernel_\iterIdx)\,.
    \label{eq:mig}
\end{equation}
If we assume that the GP information gain $\frac{1}{2}\log\det(\eye + \regFactor^{-1} \Kernel_\iterIdx)$ is bounded by $\mig_\iterIdx$ as above, we would then have that:
\begin{equation}
    \flexnorm{
            \frac{\diff\hat\pmeasure_{\iterIdx}}{\diff\pmeasure_{\iterIdx}}
    }_\infty
    \leq
    \exp \mig_\iterIdx\,,
\end{equation}
which is usually an unbounded term, given that $\mig_\iterIdx$ is a non-decreasing function of $\iterIdx$. However, for a finite domain $\card{\domain} < \infty$, we trivially have that $\mig_\iterIdx \leq \mig_{\card{\domain}}$, given that the largest finite subset $\domain_\iterIdx$ of $\domain$ is $\domain$ itself. Hence, in this case, the following holds:
\begin{equation}
    \forall \iterIdx\in\N\,,
    \quad
    \flexnorm{
            \frac{\diff\hat\pmeasure_{\iterIdx}}{\diff\pmeasure_{\iterIdx}}
    }_\infty
    \leq
    \exp \mig_{\card{\domain}}\,,
\end{equation}
which is bounded for most practical kernels. Putting it all together, we have that:
\begin{equation}
     \forall \iterIdx\in\N\,, \quad \expectation[\objective(\location^*) - \objective(\location_\iterIdx)] \leq \regret_\iterIdx\exp \mig_{\card{\domain}}\,,
     \label{eq:regret-mig-bound}
\end{equation}
where $\regret_\iterIdx$ represents the Bayesian regret when $\location_\iterIdx$ maximizes a sample from the exact GP posterior, instead of its approximation. Given that $\mig_{\card{\domain}}$ is a finite constant, the asymptotic rates for the Bayesian cumulative regret remain the same even in the presence of an underestimated predictive variance.

\paragraph{Problem with $\mig_\iterIdx$ bound.} An issue with the finite bound on the Radon-Nikodym derivative above can be found when contrasting the classic definition of the maximum information gain $\mig_\iterIdx$ in the literature \eqref{eq:mig} with the actual information gain in the algorithm, i.e., the mutual information between an exact GP and the collected observations, which can be shown to be quantified by $\frac{1}{2}\log\det(\eye + \regFactor^{-1} \Kernel_\iterIdx)$ \citep{Srinivas2010}. The issue is that, although $\mig_\iterIdx \leq \mig_{\card{\domain}}$ from the definition commonly found in the literature \citep{Srinivas2010, Chowdhury2017}, it does not necessarily follow that the actual information gain is bounded after we account for multiplicities in the eigenvalues. The algorithm is in principled allowed (and likely) to make repeated choices of the same $\location_\iterIdx = \location_*$ for all $\iterIdx$ at some point, for a fixed $\location_* \in \domain$, which may or may not be the optimizer $\location^*$. In the simplest case, if all of the algorithm's choices are made at any fixed $\location_*\in\domain$, we have that:
\begin{equation}
    \begin{split}
        \log \det(\eye + \regFactor^{-1} \Kernel_\iterIdx)
        &= \log\det(\eye + \regFactor^{-1} \features_\iterIdx^\transpose\features_\iterIdx)\\
        &= \log\det(\eye + \regFactor^{-1} \features_\iterIdx\features_\iterIdx^\transpose)\\
        &= \log\det(\eye + \iterIdx\regFactor^{-1}\ffeature(\location^*)\otimes\ffeature(\location^*))\\
        &= \log\det(1 + \iterIdx\regFactor^{-1}\norm{\ffeature(\location^*)}^2)\\
        &\geq \log (1 + \anyconstant \iterIdx),
    \end{split}
\end{equation}
for some constant $\anyconstant > 0$. Therefore, this lower bound diverges as $\iterIdx\to\infty$, whereas $\mig_\iterIdx \leq \mig_{\card{\domain}} < \infty$ remains bounded, leading to a contradiction of the previous conclusion in \autoref{eq:regret-mig-bound}.

\paragraph{Exactly matching the posterior variance.} The exact weights posterior covariance $\paramcov_\iterIdx$ can be matched if, besides randomizing the initial weights, we randomize the observations by adding noise $\vec{\tilde{\obsnoise}}_\iterIdx \sim \normal(\vec 0, \regFactor\eye)$ to them at training time, following a \emph{randomize-then-optimize} approach \citep{Bardsley2014}. Specifically, we minimize the perturbed loss:
\begin{equation}
    \tilde\loss_\iterIdx(\weights) := \frac{1}{2}\norm{\features_\iterIdx^\transpose\weights - (\observations_\iterIdx + \vec{\tilde{\obsnoise}}_\iterIdx)}^2 + \frac{\regFactor}{2} \norm{\weights - \weights_0}^2\,, \quad \vec{\tilde{\obsnoise}}_\iterIdx \sim \normal(\vec 0, \regFactor\eye), \quad \weights_0 \sim \normal(\vec 0, \eye).
    \label{eq:loss-rto}
\end{equation}
The corresponding minimizer is given by:
\begin{equation}
    \vec{\tilde\weights}_\iterIdx = (\features_\iterIdx\features_\iterIdx^\transpose + \regFactor\eye)^{-1} (\features_\iterIdx(\observations_\iterIdx + \vec{\tilde\obsnoise}_\iterIdx) + \regFactor\weights_0),
\end{equation}
whose conditional mean still matches the exact weights posterior mean:
\begin{align}
    \expectation[\vec{\tilde\weights}_\iterIdx \mid \observations_\iterIdx] &= (\features_\iterIdx\features_\iterIdx^\transpose + \regFactor\eye)^{-1} \features_\iterIdx\observations_\iterIdx
\end{align}
and whose conditional covariance now satisfies:
\begin{equation}
    \begin{split}
        \variance[\vec{\tilde\weights}_\iterIdx \mid \observations_\iterIdx] 
        &= \variance[ (\features_\iterIdx\features_\iterIdx^\transpose + \regFactor\eye)^{-1} (\features_\iterIdx(\observations_\iterIdx + \vec{\tilde\obsnoise}_\iterIdx) + \regFactor\weights_0) \mid \observations_\iterIdx ]\\
        &= (\features_\iterIdx\features_\iterIdx^\transpose + \regFactor\eye)^{-1} 
        \variance[ \features_\iterIdx\vec{\tilde\obsnoise}_\iterIdx + \regFactor\weights_0 ]
        (\features_\iterIdx\features_\iterIdx^\transpose + \regFactor\eye)^{-1}\\
        &= (\features_\iterIdx\features_\iterIdx^\transpose + \regFactor\eye)^{-1} (\regFactor\features_\iterIdx\features_\iterIdx^\transpose + \regFactor^2\eye) (\features_\iterIdx\features_\iterIdx^\transpose + \regFactor\eye)^{-1}\\
        &= \regFactor(\features_\iterIdx\features_\iterIdx^\transpose + \regFactor\eye)^{-1}\\
        &= \paramcov_\iterIdx,
    \end{split}
\end{equation}
thereby recovering the exact posterior covariance. Nevertheless, note that the original difference in posterior predictive variance according to \autoref{eq:underestimated-posterior-variance} is $\regFactor\vec\kernel_\iterIdx(\location)^\transpose(\Kernel_\iterIdx + \regFactor\eye)^{-2}\vec\kernel_\iterIdx(\location)$, which is typically negligible for small values of noise variance $\sigma_\obsnoise^2 = \regFactor$. As a consequence, our cumulative regret bound remains approximately valid for NOTS, despite the slight underestimation of the posterior variance.

\section{Experiment details}
\label{app:experiments}


\subsection{Darcy flow}
Darcy flow describes the flow of a fluid through a porous medium with the following PDE form
\begin{align*}
    -\nabla \cdot (a(x)\nabla u(x)) &= g(x), \quad  x\in \Omega = (0, 1)^2\\
    u(x) &= 0, \quad x \in \partial \Omega,
\end{align*}
where \(u(x)\) is the flow pressure,  \(a(x)\) is the permeability coefficient and  \(g(x)\) is the forcing function. We fix \(g(x)=1\) and generate different solutions at random with zero Neumann boundary conditions on the Laplacian, following the setting in \citet{Li2021fno}, as implemented by the neural operator package \citep{kossaifi2024neural}. In particular, for this problem, we generate a search space $\Sspace$ with $\card{\Sspace} = 1000$ data points. The divergence of \(f\) is \(\nabla \cdot f = \frac{\partial f_x}{\partial x} + \frac{\partial f_y}{\partial y}\) where \(f:  \Omega \rightarrow \mathbb{R}^2\) is a vector field \(f=(f_x, f_y)\). The gradient \(\nabla u = (\frac{\partial u(x, y)}{\partial x}, \frac{\partial u(x,y)}{\partial y})\) where \(u(x, y): \Omega \rightarrow \mathbb{R}\) is a scalar field.
Inspired by previous works \citep{wiker2007topology, katz1979history, mao2023physics}, we chose the following objective functions to evaluate the functions assuming that we aim to maximize the objective function \(f(\cdot)\):

\begin{enumerate}
    \item Negative total flow rates \citep{katz1979history}
\[f(u, a) = \int_{\partial\Omega} a(x) (\nabla u(x) \cdot n)ds\]
where \(s=\partial\Omega\) is the boundary of the  domain and \(n\) is the outward pointing unit normal vector of the boundary. \(q(x) = -a(x)\nabla u(x)\) is the volumetric flux which describes the rate of volume flow across a unit area. Therefore, the objective function measures the boundary outflux. Since the boundary is defined on a grid, \(n \in \{[-1, 0],  [1, 0], [0, 1], [0, -1]\}\) for the left, right, top and bottom boundaries. The boundary integral can be simplified as
\[\int_0^1 [-a(0,y) u_x(0, y) + a(1, y)u_x(1, y)]dy + \int_0^1 [-a(x, 0)u_y(x,0) +a(x,1)u_y(x,1)]dx \]
where $u_x(x,y)=\frac{\partial u}{\partial x}, u_y(x,y)=\frac{\partial u}{\partial y}$


\item  Negative total pressure (Eq 2.1 in \citep{jeong2025optimal})

\[f(u, g) = -\frac{1}{2} \int_\Omega (\| u(x)\|_2 + \beta \|g(x)\|_2 )dx\]

with \(\beta > 0\) is a coefficient for the forcing term \(g(x)\). With a constant \(g(x)\), the objective is simplified as \(  -\frac{1}{2} \int_\Omega \| u(x)\|_2 dx\).

\item Negative total potential energy \citep{wiker2007topology}
\[
\objective(\outfun,\infun) = -\int_\domain \infun(\location)\norm{\nabla\outfun(\location)}^2\diff\location + \int_\domain s(\location)\outfun(\location)\diff\location
\]
This functional corresponds to the system's total potential energy. It balances the energy dissipated by fluid friction (the first term) against the potential energy supplied by the uniform fluid source (the second term, where $s=1$ is assumed). In our design optimization context, where the underlying physical state $u$ is already a stable solution to the Darcy PDE, minimization of this functional over the set of permeability fields $\infun \in \Sspace$ determines the permeability field $\infun^*$ that requires the minimum total energy to sustain the required fluid injection (source $s=1$) while maintaining zero pressure at the boundary ($u=0$). This effectively identifies the most hydrodynamically efficient design for the given flow constraints. This functional is related to the potential power functional in \citet{wiker2007topology} with the difference that the latter requires estimates of the velocity field, while the simplified energy calculation above only uses the pressure field $\outfun$.
\end{enumerate}

\subsection{Shallow Water}
The shallow water equation on the rotating sphere is often used to model ocean waters over the surface of the globe. This problem can be described by the following PDE \citep{bonev2023spherical}:
\[
  \frac{\partial \varphi}{\partial t} + \nabla \cdot (\varphi v) = 0  \quad \text{in} \quad \mathbb{S}^2 \times \{0, +\infty\}
\]
\[
 \frac{\partial (\varphi v)}{\partial t} + \nabla \cdot (\varphi v\otimes v) = g  \quad \text{in} \quad \mathbb{S}^2 \times \{0, +\infty\}
\]
\[
\varphi=\varphi_0,  \quad v=v_0 \quad \text{on} \quad \mathbb{S}^2 \times \{0\}
\]
where the input function is defined as the initial condition of the state \(a = (\varphi_0, \varphi_0 v_0)\) with the geopotential layer depth \(\varphi\) and the discharge (\(v\) is the velocity field), \(g\) is the Coriolis force term, and $\mathbb{S}^2$ denotes the surface of the 2-sphere in $\R^3$. The output function \(u\) predicts the state function at time $t$: \((\varphi_t, \varphi_t v_t)\). For this problem, we use a search space $\Sspace$ with $\card{\Sspace} = 200$ data points.

As the shallow water equation is usually chosen as a simulator of global atmospheric variables, we adopt the most common data assimilation objective \citep{rabier1998extended, xiao2023fengwu} in the weather forecast literature defined as:
\[
    f(u, a) = \frac{1}{2}\langle a - a_p, B^{-1}(a - a_p)\rangle + \frac{1}{2} \langle u-u_{t}, R^{-1}(u- u_{t})\rangle,
\]
where \(a_p\) describes the prior estimate of the initial condition, \(u_t\) represents the ground truth function, the background kernel \(B\) and error kernel \(R\) can be computed with historical data. The objective can be defined as an inverse problem which corresponds to finding the initial condition \(a\) that generates the ground truth solution function \(u_t\). Here we simplify the objective by not penalizing the initial condition (dropping the prior term) and assuming independence and unit variance on the solution functions using an identity kernel \(R\)), the simplified objective function \(f(u) = \frac{1}{2} \langle u-u_{t}, u- u_{t}\rangle\) can be used to measure different initial conditions.

\subsection{Noise} To simulate real-world settings, noise was added to the observations by computing the empirical covariance matrix of the outputs $\observation$ in the dataset for the corresponding PDE and then adding Gaussian noise with variance set to 1\% of the coordinate-wise output variance. 

\subsection{Algorithm settings}
\label{app:settings}
NOTS was implemented using the \emph{Neural Operator} library \citep{kossaifi2024neural} and run on NVIDIA H100 GPUs on CSIRO's high-performance computing cluster. For each dataset, we selected the recommended settings for FNO models according to examples in the library. Parameters were randomly initialized using Kaiming (or He) initialization \citep{He2015nninit} for the network weights, sampling from a normal distribution with variance inversely proportional to the input dimensionality of each layer, while biases were initialized to zero. For all experiments, we trained the model for 10 epochs of mini-batch stochastic gradient descent with an initial learning rate of $10^{-3}$ and a cosine annealing scheduler. The regularization factor for the L2 penalty was set as $\regFactor := 10^{-4}$. This same setting for the regularization factor was also applied to our implementation of STO-NTS.

\section{Additional results with single-hidden-layer model}
\label{app:extra}
More closely to the setting in our theoretical results, we tested a single-hidden-layer FNO on the Darcy flow PDE. Only the last hidden layer of the model was trained via full-batch gradient descent. The FNO was configured without any lifting layer, having only a single Fourier kernel convolution and a residual connection, as in the original formulation. The number of hidden channels was set to 2048 to approximate the infinite-width limit.

\begin{figure}[t]
    \centering
    \includegraphics[width=0.475\textwidth]{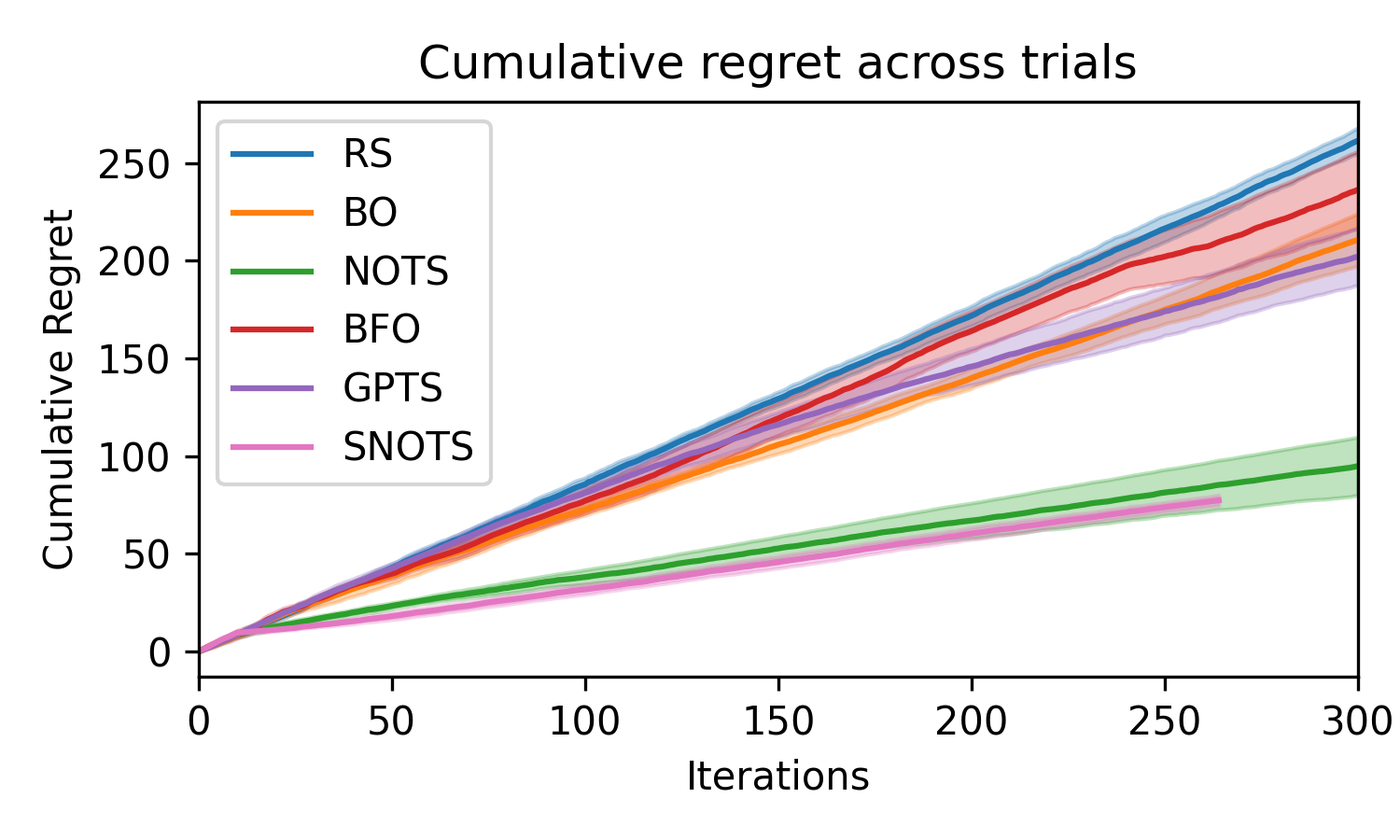}
    \includegraphics[width=0.475\textwidth]{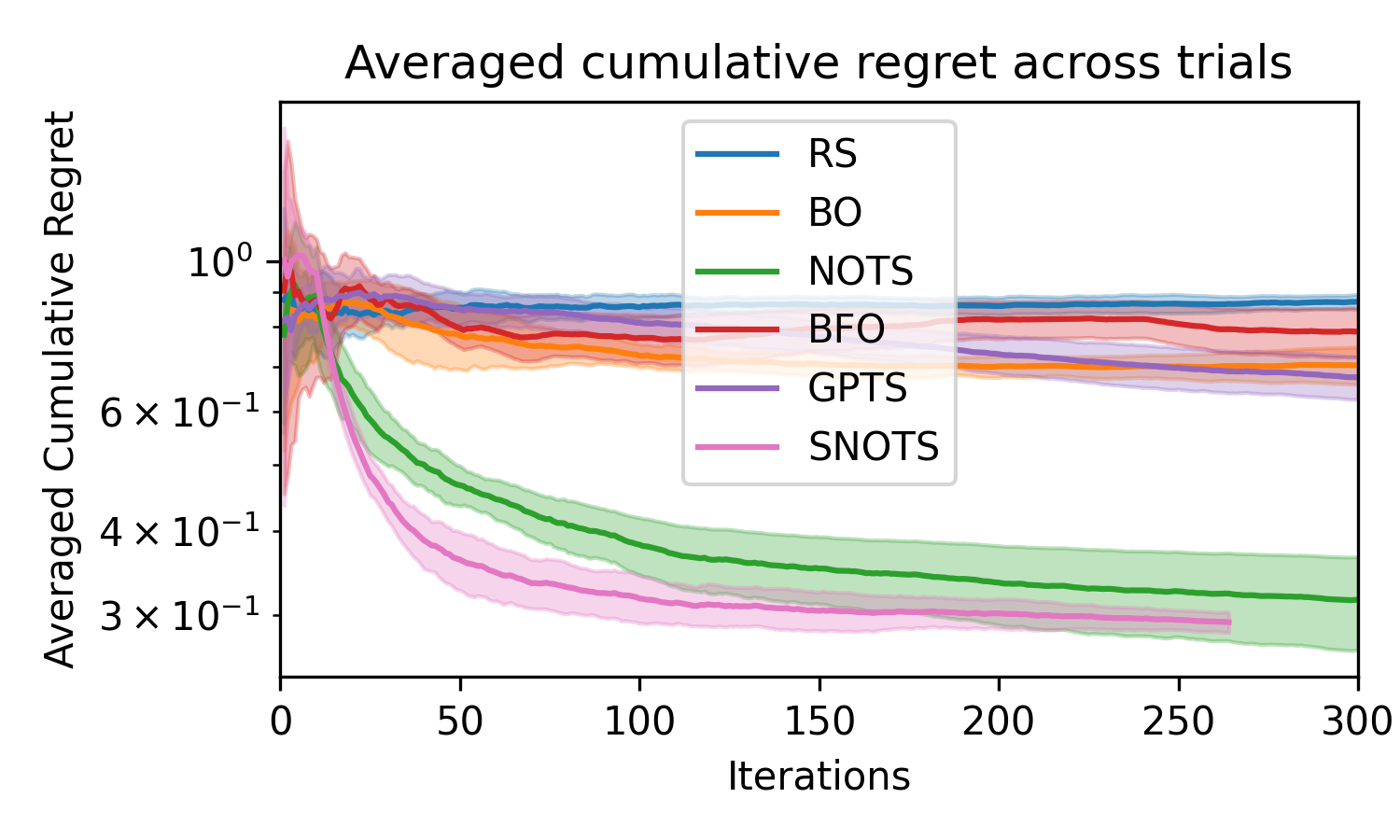}
    \caption{Cumulative regret across trials for the Darcy \emph{flow rate} optimization problem with only the last linear layer of a single-hidden-layer FNO trained via full-batch gradient descent for NOTS (labeled as SNOTS). All our results were averaged over 10 independent trials, and shaded areas represent $\pm 1$ standard deviation.}
    \label{fig:test-snots-flow}
\end{figure}

\begin{figure}[t]
    \centering
    \includegraphics[width=0.475\textwidth]{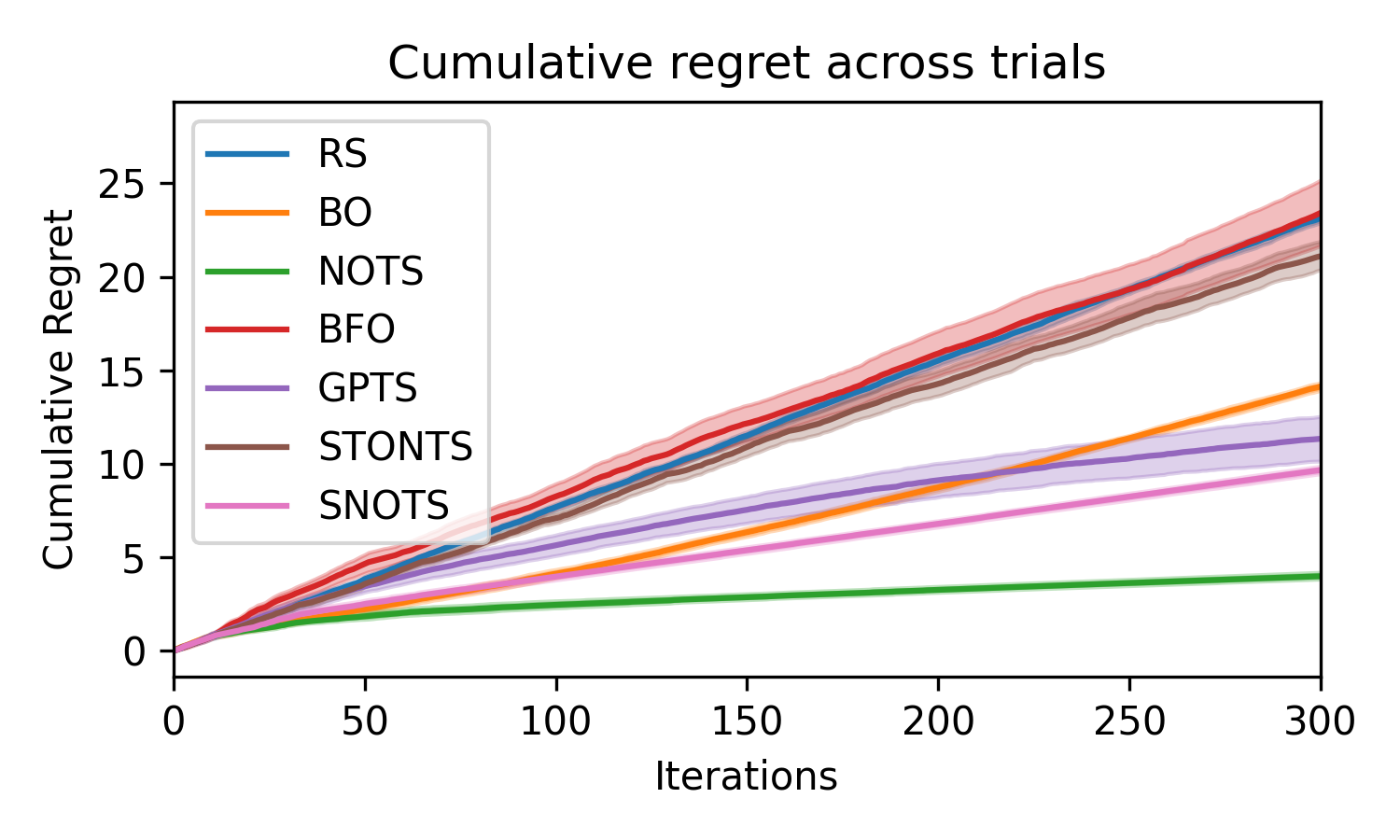}
    \includegraphics[width=0.475\textwidth]{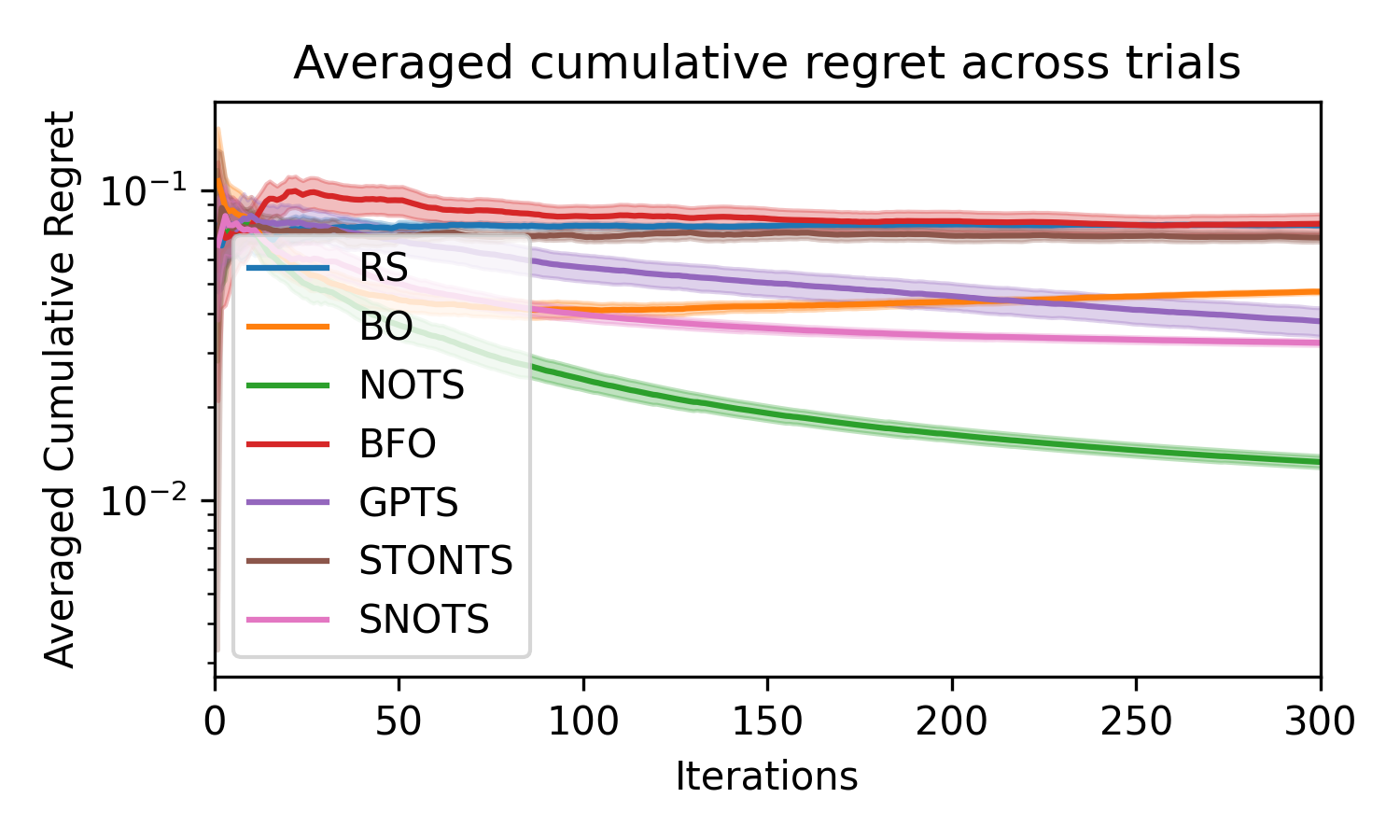}
    \caption{Cumulative regret across trials for the Darcy flow \emph{total pressure} optimization problem with only the last linear layer of a single-hidden-layer FNO trained via full-batch gradient descent for NOTS (labeled as SNOTS). 
    }
    \label{fig:test-snots-pressure}
\end{figure}

The results in \autoref{fig:test-snots-flow} show that the algorithm with the simpler model (SNOTS) can perform well in this setting, even surpassing the performance of the original NOTS. However, in the more challenging scenario imposed by the potential power problem \citep[adapted from][]{wiker2007topology}, we note that SNOTS struggles, only achieving mid-range performance when compared to other baselines, as shown in \autoref{fig:test-snots-pressure}. This performance drop suggests that the complexity of the pressure optimization problem may require more accurate predictions to capture details in the output functions that might heavily influence the potential power. In general, a quadratic objective will be more sensitive to small disturbances than a linear functional, hence requiring a more elaborate model.

\section{Limitations and extensions}
\label{app:limitations}

\paragraph{Noise.} We note that, although our result in Proposition 2 assumes a well specified noise model, it should be possible to show that the same holds for noise which is sub-Gaussian with respect to the regularization factor. The latter would allow for configuring the algorithm with any regularization factor which is at least as large as the assumed noise sub-Gaussian parameter (i.e., its variance if Gaussian distributed). However, this analysis can be quite involved and out of the immediate scope of this paper. Therefore, we leave such investigation for further research.

\paragraph{Nonlinear functionals.} We assumed a bounded linear functional in Proposition 2, which should cover a variety of objectives involving integrals and derivatives of the operator's output. However, this assumption may not hold for more interesting functionals, such as some objectives considered in our experiments. Similar to the case with noise, any Lipschitz continuous functional of the neural operator's output should follow a sub-Gaussian distribution \citep{Pisier1986}. Hence, the Gaussian approximation remains reasonable, though a more in-depth analysis would be needed to derive the exact rate of growth for the cumulative regret in these settings.

\paragraph{Mult-layer models.} For the theoretical analysis, we assumed a single hidden layer neural network as the basis of our Thompson sampling algorithm. While this choice provides a simple and computationally efficient framework, it may not be optimal for all applications or datasets. For instance, in some cases, a deeper neural network with more layers might provide better performance due to increased capacity to capture complex patterns in the data. Extending our analysis to this setting involves extending the inductive proofs for the multi-layer NNGP \citep{Lee2018,Matthews2018} to the case of neural operators. Such extension, however, may require transforming the operator layer's output back into a function in an infinite-dimensional space, which may lead to a bottleneck effect affecting the possibility of a kernel limit \citep{Liu2020}. In the single-hidden-layer case, such effect is avoided by operating directly with the finite-dimensional input function embedding $\nnkop(\infun)(\outlocation)\in \R^{\dimension_\nnkernel}$. Recently, concurrent work has explored the infinite-width limit for multi-layer neural operators \citep{deSouza2025}, but their applicability to NOTS is left as subject of future work.

\paragraph{Prior misspecification.} We assumed that the true operator $\nnop_*$ follows the same prior as our model, which was also considered to be infinitely wide. While this assumption greatly simplifies our analysis, more practical results may be derived by considering finite-width neural operators and a true operator which might not exactly correspond to a realization of the chosen class of neural operator models. For the case of finite widths, one simple way to obtain a similar regret bound is to let the width of the network grow at each Thompson sampling iteration. The approximation error between the GP model and the finite width neural operator can potentially be bounded as $\bigo(\paramdim^{-1/2})$ \citep{Liu2020}. Hence if the sequence of network widths $\{\paramdim_\iterIdx\}_{\iterIdx=1}^\infty$ is such that $\sum_{\iterIdx=1}^\infty \frac{1}{\sqrt{\paramdim_\iterIdx}} < \infty$, a similar regret bound to the one in Proposition 2 should be possible. Furthermore, if other forms of prior misspecification need to be considered, analyzing the Bayesian cumulative regret (instead of the more usual frequentist regret), as we did, allows one to bound the resulting cumulative regret of the misspecified algorithm via the Radon-Nikodym derivative $\frac{\diff\pmeasure}{\diff\hat\pmeasure}$ of the true prior $\pmeasure$ with respect to the algorithm's prior probability measure $\hat{\pmeasure}$. If its essential supremum $\left\lVert\frac{\diff\pmeasure}{\diff\hat\pmeasure}\right\rVert_\infty$ is bounded, then the resulting cumulative regret remains proportional to the same bound derived as if the algorithm's prior was the correct one \citep{Russo2014}.

\section{Broader impact}
\label{app:impact}
This work primarily focuses on the theoretical exploration of extending Thompson Sampling to function spaces via neural operators. As such, it does not directly engage with real-world applications or present immediate societal implications. However, the potential impact of this research lies in its application. By advancing methods for function-space optimization, this work may indirectly contribute to various fields that utilize complex simulations and models, such as climate science, engineering, and physics. Improvements in computational efficiency and predictive power in these fields could lead to positive societal outcomes, such as better climate modeling or engineering solutions. Nevertheless, any algorithm with powerful optimization capabilities carries ethical considerations. Its deployment in domains with safety-critical implications must be approached with care to avoid misuse or unintended consequences. Researchers and practitioners should ensure transparency, fairness, and accountability in applications potentially affecting society.

\end{document}